
\documentclass[sigconf]{aamas} 

\usepackage{balance} 
\usepackage{algorithm}
\usepackage[noend]{algorithmic}
\usepackage[utf8]{inputenc}
\usepackage[T1]{fontenc}

\usepackage{tikz}
\usepackage{graphicx}
\usetikzlibrary{positioning}
\usetikzlibrary{matrix}
\setcounter{secnumdepth}{3} 
\usepackage{caption}
\usepackage{subcaption}
\usepackage{parskip}
\usepackage{tabularx,ragged2e}
\usepackage{multicol}
\usepackage{multirow}
\usepackage{titlesec}
\usepackage{tabularray}
\usepackage{diagbox}
\usepackage{makecell} 
\definecolor{darkgreen}{RGB}{0,100,0}
\usepackage{setspace}
\usepackage{scalerel}
\usepackage{amsfonts}
\usepackage{enumitem}
\usepackage{amsmath}
\DeclareMathOperator*{\argmax}{argmax} 
\usepackage{amsthm}
\usepackage{mathrsfs}
\theoremstyle{plain}
\newtheorem{theorem}{Theorem}[section]

\newtheorem{lemma}[theorem]{Lemma}

\theoremstyle{definition}
\newtheorem{definition}[theorem]{Definition}

\newtheorem{assumption}[theorem]{Assumption}
\theoremstyle{remark}

\usepackage{commath}   
\usepackage{xcolor}
\usepackage{bbm}

\usepackage{bm}
\usepackage{float}
\usepackage{forest}
\captionsetup[subfigure]{subrefformat=simple,labelformat=simple}

\usepackage{mathtools}
\usepackage[capitalize,noabbrev]{cleveref}
\usepackage{titlesec}

\makeatletter
\renewcommand\subsubsection{\@startsection{subsubsection}{3}{\z@}%
                                     {-3.25ex\@plus -1ex \@minus -.2ex}%
                                     {1.5ex \@plus .2ex}%
                                     {\normalfont\normalsize\bfseries}}
\graphicspath{ {./Image/} }
\setcopyright{ifaamas}
\acmConference[AAMAS '24]{Proc.\@ of the 23rd International Conference
on Autonomous Agents and Multiagent Systems (AAMAS 2024)}{May 6 -- 10, 2024}
{Auckland, New Zealand}{N.~Alechina, V.~Dignum, M.~Dastani, J.S.~Sichman (eds.)}
\copyrightyear{2024}
\acmYear{2024}
\acmDOI{}
\acmPrice{}
\acmISBN{}
\acmSubmissionID{415}

\title[AAMAS-2024 Formatting Instructions]{Analysing the Sample Complexity of Opponent Shaping}

\newcommand{\eq}{\footnotemark[1]}


\author{Kitty Fung\eq{}}
\affiliation{
\institution{University of Oxford}
\city{Oxford}
\country{United Kingdom}}
\email{kittyfung01@gmail.com}

\author{Qizhen Zhang$\eq{}$}
\affiliation{
  \institution{University of Oxford}
  \city{Oxford}
  \country{United Kingdom}}
\email{qizhen.zhang@eng.ox.ac.uk}

\author{Chris Lu}
\affiliation{
  \institution{University of Oxford}
  \city{Oxford}
  \country{United Kingdom}}
  \email{christopher.lu@exeter.ox.ac.uk}

\author{Jia Wan}
\affiliation{
  \institution{Massachusetts Institute of Technology}
  \city{Cambridge}
  \country{United States}}
    \email{jiawan@mit.edu}
    
\author{Timon Willi}
\affiliation{
  \institution{University of Oxford}
  \city{Oxford}
  \country{United Kingdom}}
  \email{timon.willi@eng.ox.ac.uk}
  
\author{Jakob Foerster}
\affiliation{
  \institution{University of Oxford}
  \city{Oxford}
  \country{United Kingdom}}
  \email{jakob.foerster@eng.ox.ac.uk}

\begin{abstract}

Learning in general-sum games often yields collectively sub-optimal results. Addressing this, \textit{opponent shaping} (OS) methods actively guide the learning processes of other agents, empirically leading to improved individual and group performances in many settings. 
Early OS methods use higher-order derivatives to shape the learning of co-players, making them unsuitable to shape multiple learning steps. Follow-up work, Model-free Opponent Shaping (M-FOS), addresses these by reframing the OS problem as a \textit{meta-game}. 
In contrast to early OS methods, there is little theoretical understanding of the M-FOS framework. 
Providing theoretical guarantees for M-FOS is hard because A) there is little literature on theoretical sample complexity bounds for meta-reinforcement learning B) M-FOS operates in continuous state and action spaces, so theoretical analysis is challenging. 
In this work, we present R-FOS, a tabular version of M-FOS that is more suitable for theoretical analysis. 
R-FOS discretises the continuous meta-game MDP into a tabular MDP. Within this discretised MDP, we adapt the $R_{max}$ algorithm, most prominently used to derive PAC-bounds for MDPs, as the meta-learner in the R-FOS algorithm. We derive a sample complexity bound that is exponential in the cardinality of the inner state and action space and the number of agents. Our bound guarantees that, with high probability, the final policy learned by an R-FOS agent is close to the optimal policy, apart from a constant factor. Finally, we investigate how R-FOS's sample complexity scales in the size of state-action space. Our theoretical results on scaling are supported empirically in the Matching Pennies environment.
\footnotetext[1]{Equal Contribution \\ Work done at the Foerster Lab For AI Research (FLAIR), University of Oxford \\ Corresponding Authors: kittyfung01@gmail.com, qizhen.zhang@eng.ox.ac.uk \\ }
\end{abstract}


\keywords{Opponent Shaping; Multi-Agent; Reinforcement Learning; Meta Reinforcement Learning; Sample Complexity}

\newcommand{\BibTeX}{\rm B\kern-.05em{\sc i\kern-.025em b}\kern-.08em\TeX}


\makeatletter
\gdef\@copyrightpermission{
	\begin{minipage}{0.3\columnwidth}
		\href{https://creativecommons.org/licenses/by/4.0/}{\includegraphics[width=0.90\textwidth]{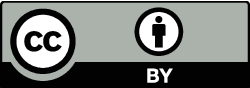}}
	\end{minipage}\hfill
	\begin{minipage}{0.7\columnwidth}
		\href{https://creativecommons.org/licenses/by/4.0/}{This work is licensed under a Creative Commons Attribution International 4.0 License.}
	\end{minipage}
	\vspace{5pt}
}
\makeatother


\begin{document}

\pagestyle{fancy}
\fancyhead{}

\maketitle 




\section{Introduction}


Learning in general-sum games commonly leads to collectively worst-case outcomes~\citep{lola}. To address this, \textit{opponent shaping} (OS) methods account for opponents’ learning steps and influence other agents' learning processes. Empirically, this can improve individual and group performances.


Early OS methods~\citep{lola, letcher2018stable, meta-MAPG} rely on higher-order derivatives, which are high-variance and result in unstable learning. They are also myopic, focusing only on the opponent's immediate future learning steps rather than their long-term development \citep{lu2022mfos}. Recent work, Model-free Opponent Shaping (M-FOS) \citep{lu2022mfos}, solves the above challenges. M-FOS introduces a \textit{meta-game} structure, each \textit{meta-step} representing an episode of the embedded ``inner'' game. The \textit{meta-state} consists of ``inner'' policies, and the \textit{meta-policy} generates an inner policy at each \textit{meta-step}. M-FOS uses model-free optimisation techniques to train the meta-policy, eliminating the need for higher-order derivatives to accomplish long-horizon opponent shaping. The M-FOS framework has shown promising long-term shaping results in social-dilemma games \citep{lu2022mfos,khancontext}.

The original M-FOS paper presents two cases of the M-FOS algorithm. For simpler, low-dimensional games, M-FOS learns policy updates directly by taking policies as input and outputting the next policy as an action. Inputting and outputting entire policies does not extend well to more complex, higher-dimensional games, e.g. when policies are represented as neural networks. The original M-FOS paper also proposes a variant which uses trajectories as inputs instead of the exact policy representations. In this work we derive the sample complexity for both cases.

Whereas some previous OS algorithms enjoy strong theoretical foundations thanks to the Differentiable Games framework \citep{balduzzi2018mechanics}, the M-FOS framework has not been investigated theoretically. Understanding the sample complexity of an algorithm is helpful in many ways, such as evaluating its efficiency or predicting the learning time. However, providing theoretical guarantees for M-FOS is challenging because A) there is very little literature on theoretical sample complexity bounds for even single-agent meta-reinforcement learning (RL), let alone multi-agent and B) M-FOS operates in continuous state and action space (the meta-game). 


In this work, we present R-FOS, a tabular algorithm approximation of M-FOS. Unlike M-FOS, which operates in a continuous meta-MDP, R-FOS operates in a discrete approximation of the original meta-MDP. The resulting \textit{discrete} MDP allows us to perform rigorous theoretical analysis. We adapt R-FOS from M-FOS such that it still maintains all the key properties of M-FOS. 
Within this discrete, approximate MDP, R-FOS applies the $R_\text{MAX}$ algorithm \cite{pacmdp, kakade_2003} to the M-FOS meta-game. $R_\text{MAX}$ is a model-based reinforcement learning (MBRL) algorithm typically used for the sample complexity analysis of tabular MDPs. Using existing results developed for $R_\text{MAX}$ \citep{rmax}, we derive an exponential sample complexity PAC-bound, which guarantees with high probability (1-$\delta$) that the optimal policy in the discretised meta-MDP is very close ($<\epsilon$ away) to the policy learned by R-FOS. We then derive several bounds which guarantee policies between the original meta-MDP and the discretised meta-MDP are close to each other up to a constant distance. Lastly, combining all of the previous bounds we derived, we obtain the final exponential sample complexity result.

For notational simplicity, we mostly omit the ``meta'' prefix in the rest of the paper. For example, the terms ``MDP'', ``transition function'', and ``policy'', refer to the meta-MDP, meta-transition function, and meta-policy respectively. We use the prefix ``inner'' whenever we refer to the inner game.
Furthermore, our analysis of M-FOS is limited to the asymmetric \textit{shaping} case (i.e. the meta-game of shaping a naive inner-learner\footnote{Naive learners are players who update their policy assuming other learning agents are simply a part of the environment.}) and we leave the extension to meta-selfplay for future work.

Our contributions are three-fold:

\begin{enumerate} [leftmargin=0.5cm]
    \item We present R-FOS (see Algorithm \ref{alg:rmax}), a tabular approximation of M-FOS. Instead of learning a meta-policy inside the continuous meta-MDP $M$, R-FOS learns a meta-policy inside a discretised meta-MDP which approximates $M$. Inside this discretised Meta-MDP, R-FOS uses $R_\text{MAX}$ as the meta-agent. Note that R-FOS still maintains key properties of the original M-FOS algorithm, such as being able to exploit naive learners.
    \item We present an exponential sample complexity bound for both cases described in M-FOS (See Theorems \ref{thm:pac1} and \ref{thm:pac2}). Specifically, we prove that, with high probability, the final R-FOS policy is close to the optimal policy in the original meta-MDP up to a constant distance.
    \item We implement R-FOS \footnote{The project code is available on https://github.com/FLAIROx/rfos} and analyse the empirical sample complexity in the \textit{Matching Pennies} environment. We establish links between theory and experiments by demonstrating that in both realms, sample complexity scales exponentially with the inner-game's state-action space size.

\end{enumerate}

\section{Related Work}
\label{section:relatedwork}

\textbf{Theoretical Analysis of Differentiable Games:} Much past work assumes that the game being optimised is differentiable \cite{balduzzi2018mechanics}. This assumption enables far easier theoretical analysis because one can directly use end-to-end gradient-based methods rather than reinforcement learning in those settings. Several works in this area investigate the convergence properties of various algorithms \citet{letcher2020impossibility,schafer2019competitive, balduzzi2018mechanics}.

\textbf{Opponent Shaping:} More closely related to our work are methods that specifically analyse OS. SOS \cite{letcher2018stable} and COLA \cite{willi2022cola} both analyse opponent-shaping methods that operate in the differentiable games framework. These works provide theoretical \textit{convergence} analysis for opponent-shaping algorithms; however, neither work analyzes sample complexity. POLA \cite{zhao2022proximal} theoretically analyses an OS method that is invariant to policy parameterization. M-FOS does not operate in the differentiable games framework. While this enables M-FOS to scale to more challenging environments, such as Coin Game \cite{lu2022mfos}, it comes at the cost of convenient theoretical analysis. \citet{khancontext} empirically scales M-FOS to more challenging environments with larger state spaces, while \citet{lu2022adversarial} empirically investigates applying M-FOS to a state-based adversary. To the best of our knowledge, our work is the first to theoretically analyse OS outside of the differentiable games framework. Furthermore, our work is the first to analyse the sample complexity of an OS method.

\textbf{Theoretical Analysis of Sample Complexity in RL:} There are several works that use the $R_\text{MAX}$ \cite{rmax} framework to derive the sample complexity of RL algorithms across a variety of settings. Closely related to our work is \citet{marco}, which uses the $R_\text{MAX}$ algorithm to derive sample complexity bounds for learning in fully-cooperative multi-agent RL. 

Our work is also related to methods that analyse sample complexity on continuous-space RL. Analyzing the sample complexity of algorithms in continuous-space RL is particularly challenging because there are an infinite number of potential states. To address this, numerous techniques have been suggested that each make specific assumptions:
\citet{liu2018} assumes a \textbf{stationary asymptotic occupancy distribution} under a random walk in the MDP. \citet{Malik2021SampleER} uses an effective planning window to handle MDPs with non-linear transitions. However, neither of these assumptions applies to M-FOS.

Instead, this work focuses on \textbf{discretising} the continuous space and expresses the complexity bounds in terms of the discretisation grid size. This is related to the concept of \textit{state aggregation} \cite{softstate, aggregation}, which groups states into clusters and treats the clusters as the states of a new MDP. These previous works only formulated the aggregation setting in MDPs and did not provide theoretical or empirical sample complexity proofs.

Furthermore, prior studies on \textit{PAC-MDP} did not empirically verify the connection between the sample complexity and size of the state-action space. In this work, we \textbf{empirically verify the relationship between the sample complexity and the cardinality of the inner-state-action-space} in the Matching Pennies game.

\begin{algorithm}[h!]
    \caption{The R-FOS Algorithm}
    \label{alg:rmax}
    \textbf{Meta-game Inputs:} Discretised meta-MDP $\langle \hat{S}_d, \hat{A}_d, T_d, R_d, \gamma \rangle$ $m$-known, meta-game horizon $h_\text{meta}$ \\
    \textbf{Inner-game Inputs:} Inner game G = $\langle S, A, T_\text{inner}, R_\text{inner}\rangle$ inner-game horizon $h_\text{inner}$\\
    \textbf{Initialisation:} $\forall \hat{s}_d \in \hat{S}_d, \hat{a}_d \in \hat{S}_d, \hat{s}_d^\prime \in \hat{S}_d$ $\hat{Q}(\hat{s}_d, \hat{a}_d)\gets 0,  \: r(\hat{s}_d, \hat{a}_d) \gets 0 , \:  n(\hat{s}_d, \hat{a}_d)\gets 0, \:  n(\hat{s}_d, \hat{a}, \hat{s}')\gets 0$

        \begin{algorithmic}[1]
        \FOR{meta-episode $ = 0,1,..$}    
            \STATE Reset environment    
            \FOR{meta-time step $= 1, 2, ..., h_\text{meta}$} 
                \STATE Choose $\hat{a}_{d} := \argmax_{\hat{a}_d \in \hat{A}_d}\hat{Q}(\hat{s}_d, \hat{a}_d^\prime)$
                \STATE Roll-out $K$ inner games of length $h_\text{inner}$ using 
                $\hat{a}_{d} = \phi_t$
                \STATE Inner-game opponents each update their own inner-policies naively
                \STATE Let $R$ be our agent's $K$ inner-games' discounted return
                \STATE Let $\hat{s}_d^\prime$ be the next meta-state after executing meta-action $\hat{a}_d$ from meta-state $\hat{s}_d$
               
                \IF{$n(\hat{s}_d, \hat{a}_d)<m$}
                    \STATE $r(\hat{s}_d, \hat{a}_d) \gets r(\hat{s}_d, \hat{a}_t)+R$
                    \STATE $n(\hat{s}_d, \hat{a}_d) \gets n(\hat{s}_d, \hat{a}_t)+1$   
                    \STATE $n(\hat{s}_d, \hat{a}_d, \hat{s'}_d) \gets n(\hat{s}_d, \hat{a}_d, \hat{s}^\prime_d)+1$
                    \IF{$n(\hat{s}_d, \hat{a}_d)=m$}
                        \FOR{$i=1,2,3,\cdots,\lceil \frac{ln(\frac{1}{\varepsilon(1-\gamma)})}{1-\gamma} \rceil$}
                            \FORALL{$(\hat{s},\hat{a})$}
                                \IF{$n(\hat{s}_d,\hat{a}_d) \geq m$}
                                    \STATE $\hat{Q}(\hat{s}_d,\hat{a}_d) \gets \hat{R}_d(\hat{s}_d, \hat{a}_d) + \gamma \sum_{s^\prime_d}\hat{T}_d(\hat{s}^\prime|\hat{s}_d, \hat{a}_d) \max_{\hat{a}_d^\prime}\hat{Q}(\hat{s}_d^\prime, \hat{a}_d^\prime)$
                                \ENDIF
                            \ENDFOR
                        \ENDFOR
                    \ENDIF
                \ENDIF
                \STATE $\hat{s} \gets \hat{s}'$            
            \ENDFOR
        \ENDFOR
    \end{algorithmic}
\end{algorithm}

\section{Background}
\label{section:Background}

\subsection{Stochastic Game}
A stochastic game (SG)\footnote{We use the \textbf{bold} notation to indicate vectors over $n$ agents.} is given by a tuple $G = \langle\mathcal{I},  S, \bm{ A}, T, \bm{R}, \gamma\rangle$. 
$\mathcal{I} = \{1, \cdots, n\}$ is the set of agents, $ S$ is the state space, $\bm{ A}$ is the cross-product of the action space for each agent such that the joint action space $\bm{ A} =  A^1 \times \cdots \times  A^n$, 
$T:  S \times \bm{ A} \mapsto  S$ is the transition function, $\bm{R}$ is the cross-product of reward functions for all agents such that the joint reward space $\bm{R} = R^1 \times \cdots \times R^n$, and $\gamma \in [0,1)$ is the discount factor.

In an SG, agents simultaneously choose an action according to their stochastic policy at each timestep $t$, $a^i_t \sim \pi^i_{\phi^i}(\cdot|s^i_t)$. The joint action at timestep $t$ is $\bm{a_t} = \{a^i_t, \bm{a^{-i}_t}\}$, where the superscript $\bm{-i}$ indicates all agents except agent $i$ and $\phi^i$ is the policy parameter of agent $i$. The agents then receive reward $r^i_t = R^i(s_t, \bm{a_t})$ and observe the next state $s_{t+1} \sim T(\cdot | s_t, \bm{a_t})$, resulting in a trajectory $\tau^{i} = (s_0, \bm{a_0}, r^i_0, ..., s_T, \bm{a_T}, r^i_T)$, where $T$ is the episode length.

\subsection{Markov Decision Process}
A Markov decision process (MDP) is a special case of stochastic game and can be described as $\mathcal{M}=$ $\langle S,  A, T, R, \gamma\rangle$, where $ S$ is the state space, $ A$ is the action space, $T\left(s_{t+1} \mid s_t, a_t\right)$ is the transition function, $R\left(s_t, a_t\right)$ is the reward function, and $\gamma$ is the discount factor. At each timestep $t$, the agent takes an action $a_t \in  A$ from a state $s_t \in  S$ and moves to a next state $s_{t+1} \sim T\left(\cdot \mid s_t, a_t\right)$. Then, the agent receives a reward $r_t=R\left(s_t, a_t\right)$.

\subsection{Model-Free Opponent Shaping}
Model-free Opponent Shaping (M-FOS) \cite{lu2022mfos}
frames the OS problem as a meta-reinforcement-learning problem, in which the opponent \textit{shaper} plays a meta-game. The meta-game is an MDP (sometimes we also refer to it as meta-MDP ), in which the meta-agent controls one of the inner agents in the inner game. 

The inner game is the actual environment that our agents are playing, which is an SG. The original M-FOS describes two cases for the meta-state:
\begin{enumerate}
    \item In the meta-game at timestep $t$, the M-FOS agent is at the meta-state $\hat{s}_t = [\phi^i_{t-1}, \bm{\phi^{-i}_{t-1}}]$, which contains all inner-agents' policy parameters for the underlying SG. In this work, we assu,e all inner-agents are parameterised by their  Q-value table.
    \item Alternatively, $\hat{s}_t = \bm{\tau}$ in cases where past trajectories of the inner-game represent the policies sufficiently.
\end{enumerate}
We provide theoretical sample complexity results for both of these two cases

The meta-agent takes a meta-action $\hat{a}_t = \phi^i_t \sim \pi_\theta(\cdot|\hat{s}_t)$, which is the M-FOS' inner agent's policy parameters. The action is chosen from the meta-policy $\pi$ parameterized by parameter $\theta$. In this work, we only look at the case where the meta-policy is a Q-value function table, and is denoted as $\hat{Q}$ instead. The M-FOS agent receives reward $r_t = \sum_{k=0}^K r^i_k(\phi^i_t, \bm{\phi^{-i}_t})$, where $K$ is the number of inner episodes. A new meta-state is sampled from a stochastic transition function $\hat{s}_{t+1} \sim T(\cdot| \hat{s}_t, \hat{a}_t)$. 

Note that the original paper introduces two different algorithms: The first meta-trains M-FOS against \textit{naive learners} commonly resulting in exploiting them. The second instead considers \textit{meta-self-play}, whereby two M-FOS agents are trained to shape each other, resulting in reciprocity. In this work we only consider the first, asymmetric case. 

\subsection{The $R_\text{MAX}$ Algorithm}

$R_\text{MAX}$ \citep{rmax} is an MBRL algorithm proposed for analysing the sample complexity for tabular MDPs. Given any MDP $M$, $R_\text{MAX}$ constructs an \textit{empirical MDP} $\hat{M}$ that approximates $M$. The approximation is done by estimating the reward function $R$ and transition $T$ using \textit{empirical} samples. The resulting approximate reward and transition models are denoted by $\hat{R}$ and $\hat{T}$ respectively. 

$R_\text{MAX}$ encourages exploration by dividing the state-action pairs into two groups - those that have been visited at least $m$-times, and those that haven't. The set of state-action pairs that have been visited at least $m$-times is called the ``$m$-known set''. Using the empirical MDP $\hat{M}$, the $R_\text{MAX}$ algorithm constructs an \textit{$m$-known empirical MDP}. This $m$-known empirical MDP behaves almost exactly as the empirical MDP, except when the agent is at a state-action pair outside the $m$-known set. When the agent is outside the $m$-known set, the transition function is self-absorbing (i.e. the transition function only transitions back to the current state) and the reward function is the maximum (See Table \ref{tb:theMs_appendix} in the appendix). The consequence of the $m$-known setup is the agent is encouraged to explore state-action pairs that have high uncertainty (i.e. that has been visited under $m$ times). Specifically, the value function for the under-visited states is the maximum possible expected return, which gives the algorithm its name. This is in line with \textit{optimism in the face of uncertainty}.

\subsection{$\varepsilon$-Nets}

\begin{definition}($\varepsilon$-Net \cite{net, 10.1145/10515.10522}) For $\varepsilon>0$, $\mathcal{N}_\varepsilon$ is an $\varepsilon$-net over the set $\Theta \subseteq \mathbb{R}^D$ if for all $\theta \in \Theta$, there exists $\theta' \in \mathcal{N}_\varepsilon$ such that $\norm{\theta -\theta'}_2 \leq \varepsilon$.
\end{definition}

To discretise a $D$-dimensional sphere of radius $R$, we can use a $\varepsilon$-net containing $D$-dimensional cubes of sides $\lambda$. This results in $\left(\frac{2R}{\lambda} + 1\right)^D$ points. Within each $D$-dimensional cube, the largest distance between the vertices and the interior points comes from the center of the cube, which is $\frac{\lambda\sqrt{d}}{2}$. Therefore, to guarantee a full cover of all the points in the sphere, the largest cube size that we can have should satisfy $\varepsilon = \frac{\lambda\sqrt{d}}{2}$.From here on, we will replace the $\varepsilon$ in $\varepsilon$-net with $\alpha$ to avoid notation overloading.

\section{Sample Complexity Analysis with $R_\text{MAX}$ as Meta-Agent} \label{section:5}
As introduced in Section~\ref{section:Background}, $R_\text{MAX}$ \citep{rmax} is a MBRL algorithm for learning in tabular MDPs. We adapt the original M-FOS algorithm to use $R_\text{MAX}$ as the meta-agent (see Algorithm \ref{alg:rmax}) and refer to this adapted algorithm as \textit{R-FOS} from here on. We use a \textit{tabular Q-learner} as the naive learner for all inner-game opponents. While the original M-FOS paper uses PPO \citep{schulman2017proximal}, we choose the Q-learner for the ease of sample complexity analysis.

We provide theoretical results for the two cases of M-FOS' meta-agent proposed by the original paper \cite{lu2022mfos}. \textit{Case I} uses all agents' inner policy parameters from the previous timestep as the meta-state. \textit{Case II} instead uses the most recent inner-game trajectories as the meta-state. In both cases, the meta-action determines the inner agent's policy parameters for the next inner episode.

At a high level, we first discretise the meta-MDP, which allows us to use the theoretical bounds from $R_\text{MAX}$ (only suitable for tabular MDPs), then we develop theory for bounding the discrepancy between the continuous and discrete meta-MDP, and lastly, we use all of this to bound the final discrepancy. 
Specifically, the sample complexity analysis consists of six steps \footnote{see detailed proof in the appendix}:

\begin{enumerate}
    \item To use $R_\text{MAX}$ as the M-FOS meta-agent, we first discretise the continuous meta-MDP $M = \langle\hat{S},\hat{A}, T, R, \gamma\rangle$ into a discretised meta-MDP $M_d=\langle\hat{S}_d,\hat{A}_d, T_d, R_d, \gamma \rangle$. We first discretise the continuous meta-state space and meta-action space using epsilon-nets \cite{net, 10.1145/10515.10522}. Based on this discretised meta-state and meta-action space, we define the discretised transition and reward function. See Section \ref{step1} for details.
    \item  We then construct a $m$-known discretised MDP $M_m,d$, as described by the R-MAX algorithm \cite{pacmdp}. See Section \ref{step2} for details.
    \item Then, we deploy the $R_\text{MAX}$ algorithm in $M_m,d$. $R_\text{MAX}$ both estimates the empirical $m$-known discretised MDP, $\hat{M}_m,d$, using a maximum likelihood estimate from empirical samples and learns an optimal policy in $\hat{M}_m,d$. For example, to estimate the meta-reward, our algorithm, R-FOS evaluates the inner-game policy outputted by the meta-policy using episodic rollouts. The estimates are then used to update the meta-policy according to the R-FOS algorithm. Our R-FOS algorithm \textit{optimistically} assigns rewards for all under-visited discretised (meta-state, meta-action) pairs to encourage exploration (like $R_\text{MAX}$). See Section \ref{step3} for details.
    
    \item We next prove a PAC-bound which guarantees with large probability, that the optimal policy learnt in $\hat{M}_m,d$ is similar to the optimal policy in $M,d$. This step uses results from \citep{pacmdp}. See Section \ref{step4} for details.
    
    \item We also prove a strict bound that guarantees the optimal policies learnt in $M,d$ and $M$ are similar up to a constant. This step uses results from \citep{chow1991optimal}. See Section \ref{step5} for details.
    
    \item Using the two bounds from above, we prove the final sample complexity guarantee which quantifies that, with large probability, the optimal policy learnt in $\hat{M}_m,d$ is similar to the optimal policy in $M$ up to a constant. See Section \ref{step6} for details.
\end{enumerate}

\subsection{\textbf{Assumptions}}
We first outline all assumptions made in deriving the sample complexity of the R-FOS algorithm.
\begin{assumption} \label{ass:timestep}
Both meta-game and inner-game are finite horizon. We use $h_\text{meta}$ to denote the meta-game horizon, and $h_{\text{inner}}$ to denote the inner-game horizon.
\end{assumption}
\begin{assumption}
\label{ass:bounded_reward}
We assume the inner-game reward is bounded. For simplicity of the proof and without loss of generality, we set this bound as $\frac{1}{h_\text{inner}}$, where $h_\text{inner}$ is the horizon of the inner game. Formally, for all $(s, a), 0 \leq R_\text{inner}(s, a) \leq \frac{1}{h_\text{inner}}$. This allows us to introduce the notion of \textit{maximum inner reward} and \textit{maximum inner value function} as $R_{\max, \text{inner}} = \frac{1}{h_\text{inner}}$ and $V_{\max, \text{inner}} = 1$ respectively. This implies that the reward and value function in the meta-game are also bounded, i.e., $R_{\max} = 1$ and $V_{\max} = \frac{1}{1-\gamma}$ (the latter being an upper bound).
\end{assumption}
\begin{assumption} \label{ass:gamma}
The meta-game uses a discount factor of $\gamma$. For simplicity of the proof, the inner-game uses a discount factor of $1$. This assumption can be easily deleted by adapting $R_\text{max, inner}$ (see above) in the original proof in \citep{pacmdp}.
\end{assumption}
\begin{assumption} \label{ass:pomdp}
For simplicity, the inner game is assumed to be discrete.
\end{assumption}
\begin{assumption} \label{ass:lips_rew}
The meta-reward function is Lipschitz-continuous: For all $ \hat{s}_1, \hat{s}_2\in \hat{S} \text{ and } \hat{a}_1, \hat{a}_2 \in \hat{A},$
$$
\abs{R(\hat{s}_1, \hat{a}_1) - R(\hat{s}_2, \hat{a}_2)} \leq \mathcal{L}_R \norm{(\hat{s}_1, \hat{a}_1) - (\hat{s}_2, \hat{a}_2)}_\infty
$$
where $\mathcal{L}_R$ is the meta-reward function's Lipschitz-constant.
\end{assumption}

\begin{assumption} \label{ass:lips_trans}
The meta-transition function is Lipschitz-continuous: For all $ \hat{s}_1, \hat{s}_1^\prime, \hat{s}_2\in \hat{S} \text{ and } \hat{a}_1, \hat{a}_1^\prime, \hat{a}_2 \in \hat{A},$
$$
\abs{T(\hat{s}_1^{\prime} \mid \hat{s}_1, \hat{a}_1) - T(\hat{s}_2^{\prime} \mid \hat{s}_2, \hat{a}_2)} \leq \mathcal{L}_T \norm{(\hat{s}_1^\prime, \hat{s}_1, \hat{a}_1) - (\hat{s}_2^\prime, \hat{s}_2, \hat{a}_2)}_\infty
$$
where $\mathcal{L}_T$ is the meta-transition function's Lipschitz-constant.
\end{assumption}

\begin{assumption} \label{ass:point_to_set}
There's a Lipschitz-continuous point-to-set mapping between meta-state space and meta-action space such that for any $\hat{s}, \hat{s}^\prime \in \hat{S}$ and $\hat{a}, \hat{a}^\prime \in \hat{A}$, there exists some $\hat{a} \in U(\hat{s})$ such that $\norm{\hat{a} - \hat{a}^\prime}_\infty < \mathcal{L} \norm{\hat{s} - \hat{s}^\prime}_\infty$
\end{assumption}

\begin{assumption} \label{ass:pdf}
The meta-game transition function $T(\cdot \mid \hat{s}, \hat{a})$ is a probability density function such that $0 \leq T(\hat{s}^\prime \mid \hat{s}, \hat{a}) \leq \mathcal{L} \text{ and } \int_{\hat{S}} T(\hat{s}^\prime \mid \hat{s}, \hat{a}) \, d\hat{s}^\prime = 1, \quad \forall \hat{s}, \hat{s}^\prime \in \hat{S} \text{ and } \hat{a} \in \hat{A}$
\end{assumption}

The first four assumptions are required to be able to use the R-MAX algorithm, while the latter assumptions are needed for bounding the discrepancy between the continuous meta-MDP and the discretised meta-MDP. 

\subsection{Step 1: Discretising the Meta-MDP}
\label{step1}
 To use $R_\text{MAX}$ as the M-FOS meta-agent, we discretise the continuous meta-MDP $M = \langle\hat{S},\hat{A}, T, R, \gamma\rangle$ into a discretised meta-MDP $M_d=\langle\hat{S}_d,\hat{A}_d, T_d, R_d, \gamma \rangle$. We discretise the continuous state and action space using $\varepsilon$-nets with spacing $\alpha$.


\subsubsection{Discretising the State and Action Space: Case I} \label{ssection:arch1}

In \textit{Case I}, the meta-state $\hat{s}_{t}$ is all inner agents' policies parameters from the previous timestep. Each of the inner agent $i$'s policy is a Q-table, denoted as $\phi_i \in \mathbb{R}^{|S| \times |A|}$.
Formally, $\hat{s}_{t} := \bm{\phi}_{t-1} = [\phi_{t-1}^{i}, \bm{\phi_{t-1}^{-i}}].$ The meta-action $\hat{a}_t$ is the inner agent's current policy parameters $\phi^{i}_t$. 

For the meta-action space $\hat{A}$ and a chosen discretisation error $\alpha>0$, we obtain the $\varepsilon$-net $\hat{A}_d \subset \hat{A}$ such that for all $\hat{a} \in \hat{A}$, there exist $\hat{a}_d \in \hat{A}_d$ where 
\begin{equation}
\label{eq:alpha}
    \norm{\hat{a} - \hat{a}_d} \leq \alpha.
\end{equation}

Dividing the space with grid size $\lambda$ results in the size of discretised meta-action space upper bounded by
\begin{equation}
    \label{eq:metaactionsize}
    |\hat{A}_d| \leq \left(\frac{2 \sqrt{|S||A|}} {\lambda} + 1 \right)^{|S||A|}.
\end{equation}

Similarly, the size of the discretised meta-state space is upper bounded by
\begin{equation}
    \label{eq:metastatesize1}
    |\hat{S}_d| \leq \left(\frac{2 \sqrt{n|S||A|}} {\lambda} + 1 \right)^{n|S||A|}.
\end{equation}

\subsubsection{Discretising the State and Action Space: Case II} \label{ssection:arch1}

In Case II, the meta-state $\hat{s}_t$ is all inner agents' past trajectories. Formally, $\hat{s}_{t} := \bm{\tau}_t.$, where $\tau^i_t = \{s_0, a_0, s_1, a_1, ..., s_t, a_t\}$. Because we assume the Inner-Game is discrete (i.e. the state and action space are both discrete), the meta-state in this case does not need discretisation. Let $h$ be the maximum length of the past trajectories combined, i.e. $h=h_\text{inner} \cdot h_\text{meta}$. The size of the meta-state space is
\begin{equation}
\label{eq:metastatesize2}
    |\hat{S}_t| = (|S||A|)^{nh}.
\end{equation}
The meta-action remains the same as Case I.
\subsubsection{Discretising the Transition and Reward Function}
Under the above discretisation procedure, we define the discretised MDP $M_d=(\hat{S}, \hat{A}_d, T_d, R_d, \gamma)$, where the state space remains continuous and the action space is restricted to \textit{discretised} actions. We define the transition function and reward function for $M_d$ as:
\begin{equation}
    \label{eq:discrete_transition}
    T_d(\hat{s}'\mid\hat{s},\hat{a}_d) = \frac{T(\hat{s}'_d \mid \hat{s}_d, \hat{a}_d)}{\int_{\hat{S}} T(\hat{z}_d\mid \hat{s}_d, \hat{a}_d)d\hat{z}}
\end{equation}
Intuitively, $T_d(\hat{s}'\mid \hat{s}_d,\hat{a}_d)$ is a normalized sample of $T_d(\cdot\mid \cdot,\hat{a}_d)$ at $\hat{s}',\hat{s}_d$, and the transition probability $T_d(\cdot\mid \hat{s}, \hat{a}_d)$ takes a constant value within each grid in the state space. This means that instead of treating the transition function as a discretised distribution of all possible values of $\hat{S}_d$, we treat it as a continuous distribution over the original continuous state space, but normalize each grid from the $\varepsilon$-net into a step function.
\begin{equation}
    \label{eq:discrete_reward}
    R_d(\hat{s},\hat{a}_d) = R(\hat{s}_d, \hat{a}_d)
\end{equation}
Similarly, the reward function is continuous over the state space, but normalized each grid from the $\varepsilon$-net into a step function.

\subsection{Step 2: The $m$-known Discretised MDP}
\label{step2}


In the previous step, we converted the meta-MDP $M$ into a \textit{discretised} meta-MDP $M_d$. From $M_d$, R-FOS builds an $m$-known discretised MDP $M_{m,d}$ (see Table \ref{tb:theMs_appendix} in the appendix). 

\begin{definition}[m-Known MDP]
\label{def:known}
Let $M_d=\langle\hat{S}_d, \hat{A}_d, T_d, R_d, \gamma\rangle$ be an MDP. We define $M_m,d$ to be the $m$-known MDP. As is standard practice, $m$-known refers to the set of state-action pairs that have been visited at least $m$ times. For all state-action pairs in $m$-known, the induced MDP $M_{m,d}$ behaves identical to $M_d$. For state-action pairs outside of $m$-known, the state-action pairs are self-absorbing (i.e. only self-transitions) and maximally rewarding with $RMAX$. 

\end{definition}

\subsection{Step 3: The Empirical Discretised MDP}
\label{step3}
From the $m$-known discretised MDP $M_{m,d}$, we then learn an \textit{empirical} $m$-known discretised MDP $M_{m,d}$ by calculating the maximum likelihood from empirical samples (see Table \ref{tb:theMs_appendix} in the appendix). As shown in Algorithm \ref{alg:rmax}, R-FOS learns an optimal policy within this empirical $m$-known discretised MDP.

\begin{definition}[Empirical m-Known discretised MDP]
\label{def:known}
$M_{m,d}$ is the expected version of $\hat{M}_{m,d}$ where:
\vspace{-5pt}
\begin{center}
\begin{equation}
\begin{aligned}
    T_{m,d}(\hat{s}_d^{\prime} \mid \hat{s}_d, \hat{a}_d) & :=\left\{\begin{array}{lll} T_d(\hat{s}_d^{\prime} \mid \hat{s}_d, \hat{a}_d) & \text { if } (\hat{s}_d, \hat{a}_d) \in \text{m-known}\hfill \\ \mathbbm{1}[\hat{s}_d^{\prime}=\hat{s}_d], & \text { otherwise}\hfill & \end{array}\right.\\
    \hat{T}_{m,d}(\hat{s}_d^{\prime} \mid \hat{s}_d, \hat{a}_d) & := \begin{cases} \frac{n(\hat{s}_d, \hat{a}_d, \hat{s}_d^{\prime})}{n(\hat{s}_d, \hat{a}_d)}, & \text { if }(\hat{s}_d, \hat{a}_d) \in \text{m-known}\hfill \\ \mathbbm{1}[\hat{s}_d^{\prime}=\hat{s}_d], & \text { otherwise }\hfill\end{cases} \\
     R_{m,d}(\hat{s}_d, \hat{a}_d)& :=\left\{\begin{array}{lll} R_d(\hat{s}_d, \hat{a}_d), & \text { if } (\hat{s}_d, \hat{a}_d) \in \text{m-known}\hfill \\ R_{\max} & \text { otherwise}\hfill\end{array}\right. \\
     \hat{R}_{m,d}(\hat{s}_d, \hat{a}_d)&= \begin{cases}\frac{\sum_{i}^{n(\hat{s}_d, \hat{a}_d)}r(\hat{s}_d, \hat{a}_d)}{n(\hat{s}_d, \hat{a}_d)}, & \text { if }(\hat{s}_d, \hat{a}_d) \in \text{m-known}\hfill \\ R_{\max}, & \text { otherwise }\hfill\end{cases} \\
\end{aligned}
\end{equation}
\end{center}
\end{definition}

\subsection{Step 4: The Bound Between $M_d$ and $\hat{M}_{m,d}$}
\label{step4}
We first prove the PAC bound which guarantees that, with high probability, the optimal policies learnt in the discretised MDP $M_d$ and empirical $m-known$ discretised MDP are very close. We prove the bound using results from \cite{pacmdp}.

\begin{theorem} ($R_\text{MAX}$ MDP Bound \cite{pacmdp})
\label{eq:strehl}
Suppose that $0 \leq \varepsilon<\frac{1}{1-\gamma}$ and $0 \leq \delta<1$ are two real numbers and $M=\langle S, A, T, R, \gamma\rangle$ is any MDP. There exists inputs $m=m\left(\frac{1}{\varepsilon}, \frac{1}{\delta}\right)$ and $\varepsilon_1$, satisfying $m\left(\frac{1}{\varepsilon}, \frac{1}{\delta}\right)=O\left(\frac{(S+\ln (S A / \delta)) V_{\max }^2}{\varepsilon^2(1-\gamma)^2}\right)$ and $\frac{1}{\varepsilon_1}=O\left(\frac{1}{\varepsilon}\right)$, such that if $R_\text{MAX}$ is executed on $M$ with inputs $m$ and $\varepsilon_1$, the following holds. Let $A_t$ denote $R_\text{MAX}$'s policy at time $t$ and $s_t$ denote the state at time $t$. With probability at least $1-\delta$, $V_M^{A_t}\left(s_t\right) \geq V_M^*\left(s_t\right)-\varepsilon$ is true for all but
$$
\tilde{O}\left(S^2 A /\left(\varepsilon^3(1-\gamma)^6\right)\right)
$$
timesteps (final sample complexity bound).

\end{theorem}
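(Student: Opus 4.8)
The plan is to follow the standard PAC-MDP template of \citet{pacmdp}, which reduces the claim to three ingredients: a \emph{simulation lemma}, a concentration argument that fixes $m$, and an \emph{explore-or-exploit} dichotomy that counts the non-optimal timesteps. First I would establish a simulation lemma: if on every known pair $(s,a)$ the empirical reward satisfies $|\hat R(s,a) - R(s,a)| \le \varepsilon_1$ and the empirical transition satisfies $\|\hat T(\cdot\mid s,a) - T(\cdot\mid s,a)\|_1 \le \varepsilon_1$, then for every policy the value functions of the empirical and true models differ by at most $O(\varepsilon_1 V_{\max}/(1-\gamma))$. Setting this discrepancy to $\varepsilon$ determines the required model accuracy $\varepsilon_1 = \Theta(\varepsilon)$ up to $(1-\gamma)$ factors, which is precisely the input relation $1/\varepsilon_1 = O(1/\varepsilon)$.

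Second, I would pick $m$ so that the empirical model is $\varepsilon_1$-accurate on all known pairs with probability at least $1-\delta$. A Hoeffding bound on the reward samples together with an $\ell_1$ deviation bound for the empirical transition distribution over $S$ outcomes, followed by a union bound over the $SA$ pairs, yields $m = O\big((S + \ln(SA/\delta))V_{\max}^2 / (\varepsilon^2(1-\gamma)^2)\big)$; here the additive $S$ is the intrinsic cost of estimating an $S$-dimensional distribution in $\ell_1$, and $\ln(SA/\delta)$ is the union-bound cost.

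Third, and this is the crux, I would run the explore-or-exploit argument on the $m$-known empirical MDP $\hat M_{m,d}$. By construction this model is optimistic: unknown pairs are self-absorbing with reward $R_{\max}$, so combining optimism with the simulation lemma on the known part gives $\hat V^*(s) \ge V^*_M(s) - \varepsilon$. Fix the $\varepsilon$-horizon $H = O\big(\frac{1}{1-\gamma}\ln\frac{1}{\varepsilon(1-\gamma)}\big)$ and let $\zeta_t$ be the probability that an $H$-step rollout of the greedy policy from $s_t$ escapes the known set. When $\zeta_t$ is below a threshold $\varepsilon' = \Theta(\varepsilon(1-\gamma))$ the greedy policy's true value is within $\varepsilon$ of optimal (\emph{exploit}); when $\zeta_t \ge \varepsilon'$ an escape is likely (\emph{explore}). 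Since each pair becomes known after $m$ visits and there are $SA$ pairs, at most $mSA$ escapes occur, so an Azuma--Hoeffding martingale bound on the dependent sequence of per-window escape indicators shows that, with probability $1-\delta$, the number of timesteps with $\zeta_t \ge \varepsilon'$ is $\tilde O(mSAH/\varepsilon')$.

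Finally, I would union-bound the two failure events and substitute $V_{\max} = 1/(1-\gamma)$, $H = \tilde O(1/(1-\gamma))$ and $\varepsilon' = \Theta(\varepsilon(1-\gamma))$. Every timestep with $V_M^{A_t}(s_t) < V_M^*(s_t) - \varepsilon$ must be an explore timestep, so their count is $\tilde O(mSAH/\varepsilon') = \tilde O\big(\frac{S}{\varepsilon^2(1-\gamma)^4}\cdot SA\cdot\frac{1}{1-\gamma}\cdot\frac{1}{\varepsilon(1-\gamma)}\big) = \tilde O(S^2A/(\varepsilon^3(1-\gamma)^6))$, as claimed. The main obstacle I anticipate is the explore-or-exploit step: making the martingale argument rigorous requires defining the escape event over a rolling horizon window, verifying that optimism is preserved under the \emph{empirical} transitions rather than the true ones, and ensuring the per-window escape indicators form a bona fide supermartingale so that the generalized Hoeffding inequality applies to this dependent sequence.
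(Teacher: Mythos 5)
Your sketch is correct and matches the canonical argument: the paper itself gives no proof of this theorem --- it is imported verbatim from \cite{pacmdp} --- and your three ingredients (the simulation lemma fixing $\varepsilon_1=\Theta(\varepsilon(1-\gamma)/V_{\max})$, the $\ell_1$ concentration plus union bound yielding $m=O\left((S+\ln(SA/\delta))V_{\max}^2/(\varepsilon^2(1-\gamma)^2)\right)$, and the explore-or-exploit escape-probability count with optimism on the $m$-known model) are exactly the structure of the cited Strehl--Li--Littman proof. Your final bookkeeping also checks out: with $V_{\max}=1/(1-\gamma)$, $H=\tilde{O}(1/(1-\gamma))$ and $\varepsilon'=\Theta(\varepsilon(1-\gamma))$, the count $\tilde{O}(mSAH/\varepsilon')$ collapses to the stated $\tilde{O}\left(S^2A/(\varepsilon^3(1-\gamma)^6)\right)$.
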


\subsection{Case I}

Directly plugging in Equations $\ref{eq:metastatesize1}$ and $\ref{eq:metaactionsize}$ into Theorem \ref{eq:strehl}, we obtain the following PAC-bound.

\begin{theorem} 
\label{thm:pac1}
Suppose that $0 \leq \varepsilon<\frac{1}{1-\gamma}$ and $0 \leq \delta<1$ are two real numbers. Let M be any continuous meta-MDP with inner stochastic game $G = \langle S, A, T_\text{inner}, R_\text{inner} \rangle$.
Let us denote $M_d=\langle \hat{S}_d, \hat{A}_d, T_d, R_d, \gamma\rangle$ as discretised version of $M$ (as described in Case I) using grid size of $\lambda$. There exists inputs $m=m\left(\frac{1}{\varepsilon}, \frac{1}{\delta}\right)$ and $\varepsilon_1$, satisfying 
$$m\left(\frac{1}{\varepsilon}, \frac{1}{\delta}\right)=\tilde{O}\left(\frac{\left(\frac{2 \sqrt{n|S||A|}} {\lambda} + 1 \right)^{n|S||A|}}{\varepsilon^2(1-\gamma)^4}\right)$$
and $\frac{1}{\varepsilon_1}=O\left(\frac{1}{\varepsilon}\right)$, such that if $R\text{-}MAX$ is executed on $M$ with inputs $m$ and $\varepsilon_1$, then the following holds. Let $\mathcal{A}_t$ denote $R\text{-}MAX$'s policy at time $t$ and $\hat{s}_t$ denote the state at time $t$. With probability at least $1-\delta$, $ V_{M_d}^*\left(\hat{s}_t\right) - V_{M_d}^{\mathcal{A}_t}\left(\hat{s}_t\right) \leq \varepsilon$ is true for all but
$$
\tilde{O}\left(\frac{\left(\frac{2 \sqrt{n|S||A|}} {\lambda} + 1 \right)^{2n|S||A|} \left(\frac{2 \sqrt{|S||A|}} {\lambda} + 1 \right)^{|S||A|}} {\varepsilon^3(1-\gamma)^6}\right)
$$
timesteps.

\end{theorem}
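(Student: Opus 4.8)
The statement follows by instantiating the black-box $R_\text{MAX}$ guarantee of Theorem~\ref{eq:strehl} on the discretised meta-MDP $M_d$ and then substituting the cardinality bounds on $\hat{S}_d$ and $\hat{A}_d$ derived in Step~1. The plan is as follows.

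First I would argue that $M_d$ qualifies as a finite tabular MDP, so that Theorem~\ref{eq:strehl} applies verbatim. By construction (Equations~\ref{eq:discrete_transition} and~\ref{eq:discrete_reward}), both $T_d$ and $R_d$ are piecewise-constant over the cells of the $\varepsilon$-net with spacing $\lambda$; hence the dynamics depend only on which discretised cell the state lies in, and $M_d$ is equivalent to a finite MDP whose state set is $\hat{S}_d$ and whose action set is $\hat{A}_d$, with cardinalities bounded by Equations~\ref{eq:metastatesize1} and~\ref{eq:metaactionsize} respectively. This recognition is the only genuinely conceptual point in the argument, and I expect it to be the main thing to get right: everything downstream is substitution, but the transfer of the tabular bound is legitimate only once $M_d$ is identified with a finite MDP on $(\hat{S}_d,\hat{A}_d)$.

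With this in hand, I would apply Theorem~\ref{eq:strehl} with $S = |\hat{S}_d|$ and $A = |\hat{A}_d|$. The theorem guarantees, with probability at least $1-\delta$, that $V_{M_d}^{\mathcal{A}_t}(\hat{s}_t) \ge V_{M_d}^*(\hat{s}_t) - \varepsilon$ for all but $\tilde{O}\!\left(|\hat{S}_d|^2 |\hat{A}_d| / (\varepsilon^3 (1-\gamma)^6)\right)$ timesteps; rearranging gives exactly the desired event $V_{M_d}^*(\hat{s}_t) - V_{M_d}^{\mathcal{A}_t}(\hat{s}_t) \le \varepsilon$. For the sample size I would substitute $V_{\max} = \tfrac{1}{1-\gamma}$ (Assumption~\ref{ass:bounded_reward}) into $m = O\!\left((S + \ln(SA/\delta)) V_{\max}^2 / (\varepsilon^2(1-\gamma)^2)\right)$; absorbing the logarithmic factor and the $\delta$-dependence into $\tilde{O}$ and using $V_{\max}^2 = (1-\gamma)^{-2}$ then yields $m = \tilde{O}\!\left(|\hat{S}_d| / (\varepsilon^2(1-\gamma)^4)\right)$, which is where the extra two powers of $(1-\gamma)$ in the denominator arise.

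Finally I would substitute the explicit cardinality bounds. Because both expressions are monotone increasing in $|\hat{S}_d|$ and $|\hat{A}_d|$, plugging in the upper bounds of Equations~\ref{eq:metastatesize1} and~\ref{eq:metaactionsize} preserves the $\tilde{O}$ upper bound. This turns $m$ into $\tilde{O}\!\left(\left(\tfrac{2\sqrt{n|S||A|}}{\lambda}+1\right)^{n|S||A|} / (\varepsilon^2(1-\gamma)^4)\right)$ and the timestep count into $\tilde{O}\!\left(\left(\tfrac{2\sqrt{n|S||A|}}{\lambda}+1\right)^{2n|S||A|}\left(\tfrac{2\sqrt{|S||A|}}{\lambda}+1\right)^{|S||A|} / (\varepsilon^3(1-\gamma)^6)\right)$, matching the claimed bounds. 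The only bookkeeping to watch is that the cardinality estimates are inequalities, so the substitution is valid strictly in the direction of upper bounds — which is exactly what the $\tilde{O}$ statement asserts — and I would double-check the $V_{\max}$ factor so that the $m$-denominator is $(1-\gamma)^4$ rather than $(1-\gamma)^2$.
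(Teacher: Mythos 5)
Your proposal is correct and follows essentially the same route as the paper's own (very terse) proof: instantiate Theorem~\ref{eq:strehl} on $M_d$ with $S=|\hat{S}_d|$, $A=|\hat{A}_d|$, plug in $V_{\max}=\frac{1}{1-\gamma}$ to obtain the $(1-\gamma)^4$ denominator in $m$, drop logarithmic factors, and substitute the cardinality bounds from Equations~\ref{eq:metastatesize1} and~\ref{eq:metaactionsize}. Your additional step of justifying that $M_d$ is genuinely tabular (via the piecewise-constant structure of $T_d$ and $R_d$ over the $\varepsilon$-net cells) is a point the paper glosses over, and is a welcome tightening rather than a deviation.
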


\subsection{Case II}
Directly plugging in Equations $\ref{eq:metastatesize2}$ and $\ref{eq:metaactionsize}$ into Theorem \ref{eq:strehl}, we obtain the folling PAC-bound.

\begin{theorem} 
\label{thm:pac2}
Suppose that $0 \leq \varepsilon<\frac{1}{1-\gamma}$ and $0 \leq \delta<1$ are two real numbers. Let M be any continuous meta-MDP with inner stochastic game $G = \langle S, A, T_\text{inner}, R_\text{inner} \rangle$.
Let us denote $M_d=\langle \hat{S}_d, \hat{A}_d, T_d, R_d, \gamma\rangle$ as discretised version of $M$ (as described in Case I) using grid size of $\lambda$. There exists inputs $m=m\left(\frac{1}{\varepsilon}, \frac{1}{\delta}\right)$ and $\varepsilon_1$, satisfying 
$$m\left(\frac{1}{\varepsilon}, \frac{1}{\delta}\right)=\tilde{O}\left(\frac{(|S||A|)^{nh}}{\varepsilon^2(1-\gamma)^4}\right)$$
and $\frac{1}{\varepsilon_1}=O\left(\frac{1}{\varepsilon}\right)$, such that if $R\text{-}MAX$ is executed on $M$ with inputs $m$ and $\varepsilon_1$, then the following holds. Let $\mathcal{A}_t$ denote $R\text{-}MAX$'s policy at time $t$ and $\hat{s}_t$ denote the state at time $t$. With probability at least $1-\delta$, $ V_{M_d}^*\left(\hat{s}_t\right) - V_{M_d}^{\mathcal{A}_t}\left(\hat{s}_t\right) \leq \varepsilon$ is true for all but
$$
\tilde{O}\left(\frac{(|S||A|)^{nh} \left(\frac{2 \sqrt{|S||A|}} {\lambda} + 1 \right)^{|S||A|}} {\varepsilon^3(1-\gamma)^6}\right)
$$
timesteps.
\end{theorem}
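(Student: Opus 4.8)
The plan is to obtain this statement as a direct specialization of the generic $R_\text{MAX}$ guarantee in Theorem~\ref{eq:strehl}, applied to the discretised meta-MDP $M_d$ of Case II. The only structural difference from Case I lies in the cardinality of the meta-state space: because Assumption~\ref{ass:pomdp} forces the inner game to be discrete, the Case II meta-state $\hat{s}_t = \bm{\tau}_t$ is drawn from a \emph{finite} set of inner-game trajectories and therefore needs no $\varepsilon$-net discretisation, whereas the meta-action remains a continuous Q-table and must still be covered by an $\varepsilon$-net of spacing $\lambda$. Accordingly, I would first record the two relevant cardinalities: $|\hat{S}_d| = (|S||A|)^{nh}$ from Equation~\ref{eq:metastatesize2} for the meta-state space, and $|\hat{A}_d| \le \left(\frac{2\sqrt{|S||A|}}{\lambda}+1\right)^{|S||A|}$ from Equation~\ref{eq:metaactionsize} for the meta-action space, the latter being identical to Case I.

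Second, I would instantiate Theorem~\ref{eq:strehl} under the substitution $S \mapsto |\hat{S}_d|$ and $A \mapsto |\hat{A}_d|$, treating $M_d$ as the tabular MDP on which $R_\text{MAX}$ is executed. To convert the generic expression for $m$ into the stated form, I would eliminate $V_{\max}$ using Assumption~\ref{ass:bounded_reward}, which fixes $V_{\max} = \frac{1}{1-\gamma}$; the $V_{\max}^2$ factor then contributes an extra $(1-\gamma)^{-2}$, lifting the denominator from $(1-\gamma)^2$ to $(1-\gamma)^4$. Absorbing the logarithmic $\ln(|\hat{S}_d||\hat{A}_d|/\delta)$ term into the $\tilde{O}$ notation yields $m = \tilde{O}\!\left(\frac{(|S||A|)^{nh}}{\varepsilon^2(1-\gamma)^4}\right)$, which matches the claimed per-pair sample requirement.

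Third, for the timestep count I would substitute the same two cardinalities into the $\tilde{O}\!\left(S^2 A/(\varepsilon^3(1-\gamma)^6)\right)$ factor of Theorem~\ref{eq:strehl}, producing a leading factor $|\hat{S}_d|^2\,|\hat{A}_d|$ over $\varepsilon^3(1-\gamma)^6$. The value-gap conclusion $V_{M_d}^*(\hat{s}_t) - V_{M_d}^{\mathcal{A}_t}(\hat{s}_t) \le \varepsilon$ holding with probability at least $1-\delta$ for all but this many timesteps then transfers verbatim, since $R_\text{MAX}$ is run on $M_d$ itself and the source theorem states its guarantee intrinsically in terms of the MDP on which it operates.

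Since every ingredient is already in place, I anticipate no genuine obstacle, only a bookkeeping step. The one point demanding care is verifying that the trajectory count $(|S||A|)^{nh}$ correctly enumerates all admissible joint inner-game histories of combined length $h = h_\text{inner}\cdot h_\text{meta}$ across the $n$ agents, and keeping the exponent on the state cardinality consistent between the linear dependence in $m$ and the quadratic dependence in the final timestep bound, exactly as in the Case I expressions.
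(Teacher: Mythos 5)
Your proposal is correct and follows essentially the same route as the paper's own (very short) proof: substitute Equations~\ref{eq:metastatesize2} and~\ref{eq:metaactionsize} into Theorem~\ref{eq:strehl}, set $V_{\max}=\frac{1}{1-\gamma}$ (turning $(1-\gamma)^{2}$ into $(1-\gamma)^{4}$ in the bound on $m$), and absorb the logarithmic terms into the $\tilde{O}$ notation. One remark: your substitution correctly produces the leading factor $|\hat{S}_d|^{2}\,|\hat{A}_d| = (|S||A|)^{2nh}\left(\frac{2\sqrt{|S||A|}}{\lambda}+1\right)^{|S||A|}$ in the timestep bound, which agrees with the appendix version of this theorem and with Theorem~\ref{thm:case2}, whereas the main-text statement's exponent $nh$ in the timestep count appears to be a typo.
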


\subsection{Step 5: The Bound between $M$ and $M_d$}
\label{step5}
Next, we give a guarantee that the optimal policies learnt in the original meta-MDP $M$ and the discretised MDP $M_d$ are similar enough with a distance up to a constant factor. Using the results from \cite{chow1991optimal}, we obtain the following property.

\begin{theorem} \label{lemma:disc_error} (MDP Discretization Bound \cite{10.1145/10515.10522}) 
There exists a constant $\mathcal{K}$ (thats depends only on the Lipschitz constant $\mathcal{L}$) such that for some discretisation coarseness $\lambda \in (0, \frac{\mathcal{L}}{2}]$
$$
\norm{V^*_M - V_{M_d}^*}_\infty \leq \frac{\mathcal{K} \lambda}{(1 - \hat{\gamma})^2}.
$$
\end{theorem}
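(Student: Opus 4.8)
The plan is to treat $V_M^*$ and $V_{M_d}^*$ as the fixed points of their respective Bellman optimality operators and to control their difference through a standard contraction-plus-perturbation argument. Write $\mathcal{T}$ and $\mathcal{T}_d$ for the Bellman operators of $M$ and $M_d$, so that $V_M^* = \mathcal{T} V_M^*$ and $V_{M_d}^* = \mathcal{T}_d V_{M_d}^*$; both are $\gamma$-contractions in $\norm{\cdot}_\infty$ since $\gamma<1$. The central quantity is the one-step discrepancy $\beta := \sup_{V} \norm{\mathcal{T} V - \mathcal{T}_d V}_\infty$, the supremum taken over value functions bounded by $V_{\max}$. Given $\beta$, the argument is
\[
\begin{aligned}
\norm{V_M^* - V_{M_d}^*}_\infty &= \norm{\mathcal{T} V_M^* - \mathcal{T}_d V_{M_d}^*}_\infty \\
&\le \norm{\mathcal{T} V_M^* - \mathcal{T}_d V_M^*}_\infty + \norm{\mathcal{T}_d V_M^* - \mathcal{T}_d V_{M_d}^*}_\infty \\
&\le \beta + \gamma \norm{V_M^* - V_{M_d}^*}_\infty ,
\end{aligned}
\]
which rearranges to $\norm{V_M^* - V_{M_d}^*}_\infty \le \beta/(1-\gamma)$. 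Everything then reduces to showing $\beta = O(\lambda/(1-\gamma))$, and this extra factor of $(1-\gamma)$ is what produces the second power in the denominator of the claimed bound.

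To bound $\beta$ I would fix a state $\hat{s}$ and a value function $V$ with $\norm{V}_\infty \le V_{\max} = 1/(1-\gamma)$ (Assumption \ref{ass:bounded_reward}), and compare the inner maximands of $\mathcal{T} V(\hat{s})$ and $\mathcal{T}_d V(\hat{s})$, splitting the per-action difference into a reward term and a transition term. The reward term is handled by Lipschitz continuity: since $R_d(\hat{s}, \hat{a}_d) = R(\hat{s}_d, \hat{a}_d)$ (Equation \ref{eq:discrete_reward}) and the grid representative $\hat{s}_d$ and net action $\hat{a}_d$ both lie within $O(\lambda)$ of $(\hat{s},\hat{a})$ (Equation \ref{eq:alpha}), Assumption \ref{ass:lips_rew} gives $\abs{R(\hat{s},\hat{a}) - R_d(\hat{s},\hat{a}_d)} \le \mathcal{L}_R \cdot O(\lambda)$. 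The transition term is $\gamma \abs{\int_{\hat{S}} (T(\hat{s}'\mid\hat{s},\hat{a}) - T_d(\hat{s}'\mid\hat{s},\hat{a}_d)) V(\hat{s}')\, d\hat{s}'} \le \gamma V_{\max} \int_{\hat{S}} \abs{T - T_d}\, d\hat{s}'$; using Assumption \ref{ass:lips_trans} together with the renormalisation in Equation \ref{eq:discrete_transition} and the density bound of Assumption \ref{ass:pdf}, the total-variation integral is $O(\lambda)$, so this term is $O(\lambda V_{\max}) = O(\lambda/(1-\gamma))$. Finally, because $\mathcal{T}_d$ maximises only over the net $\hat{A}_d$, I would invoke the point-to-set mapping of Assumption \ref{ass:point_to_set} to guarantee that for the optimal continuous action there is a feasible $\hat{a}_d \in \hat{A}_d$ within $O(\lambda)$, so restricting the maximisation costs only another $O(\lambda)$ term via the same Lipschitz estimates. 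Collecting these gives $\beta = O(\lambda/(1-\gamma))$.

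Combining the two steps yields $\norm{V_M^* - V_{M_d}^*}_\infty \le \beta/(1-\gamma) = O(\lambda/(1-\gamma)^2)$, and absorbing $\mathcal{L}_R$, $\mathcal{L}_T$, $\mathcal{L}$ and the (finite) domain volume into a single constant $\mathcal{K}$ depending only on the Lipschitz data gives the stated $\norm{V_M^* - V_{M_d}^*}_\infty \le \mathcal{K}\lambda/(1-\gamma)^2$. The restriction $\lambda \in (0, \tfrac{\mathcal{L}}{2}]$ is what makes the net fine enough for Assumption \ref{ass:point_to_set} to apply and keeps the hidden constants uniform, while the conversion between net spacing $\alpha$ and grid size $\lambda$ follows the relation $\varepsilon = \lambda\sqrt{d}/2$ recorded in the $\varepsilon$-net discussion.

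I expect the transition term, rather than the reward term, to be the main obstacle: I must verify that $\int_{\hat{S}} \abs{T(\hat{s}'\mid\hat{s},\hat{a}) - T_d(\hat{s}'\mid\hat{s},\hat{a}_d)}\, d\hat{s}'$ really is $O(\lambda)$ and no larger, because $T_d$ is not $T$ evaluated at grid points but a per-cell \emph{renormalised} step function (Equation \ref{eq:discrete_transition}). Controlling the normalisation denominator $\int_{\hat{S}} T(\hat{z}_d \mid \hat{s}_d, \hat{a}_d)\, d\hat{z}$ and showing it stays close to $1$ as $\lambda \to 0$, so that renormalisation does not inflate the discrepancy, is the delicate technical step, and it is precisely where Assumptions \ref{ass:lips_trans} and \ref{ass:pdf} are needed. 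A secondary subtlety is the action-maximisation mismatch: although $\max$ is non-expansive, one still has to certify via Assumption \ref{ass:point_to_set} that the discretised feasible set is rich enough that no high-value action is lost, since otherwise the bound would degrade from $O(\lambda)$ to something uncontrolled.
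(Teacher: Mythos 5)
The paper never proves this statement: it is imported wholesale from Chow and Tsitsiklis \cite{10.1145/10515.10522, chow1991optimal} and stated without proof both in Section \ref{step5} and in the appendix, so there is no internal derivation to compare yours against. Your contraction-plus-perturbation argument is, however, essentially the standard proof of such discretisation bounds (and the route the cited source follows): a one-step Bellman discrepancy $\beta = O(\lambda/(1-\gamma))$, built from a Lipschitz reward term $O(\lambda)$ and a transition term $\gamma\,\epsilon_P V_{\max}$ with $\epsilon_P = O(\lambda)$ and $V_{\max} = 1/(1-\gamma)$, amplified to $\beta/(1-\gamma)$ by the fixed-point perturbation step. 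That is sound, and it buys something the paper's surrounding machinery does not give: Lemma \ref{lemma:simulation} compares $V_M^\pi$ and $V_{M_d}^\pi$ only for a \emph{fixed} discretised policy $\pi$, which cannot by itself control $\norm{V^*_M - V^*_{M_d}}_\infty$ since the optimiser of $M$ need not take actions in $\hat{A}_d$; your action-net/point-to-set step closes exactly that maximisation gap. One honest caveat you partially flag: converting the pointwise bound of Lemma \ref{lm:discretised_transition} into an $L_1$ bound, and controlling the renormalisation denominator in Equation \ref{eq:discrete_transition}, requires the state space to have bounded volume, so your $\mathcal{K}$ in fact depends on the dimension and diameter of $\hat{S}$ and not literally ``only on $\mathcal{L}$'' --- the same slippage is present in the theorem statement itself and in the paper's own proof of Lemma \ref{lemma:simulation}, where $\norm{T_d - T}_1 \leq 2\max_{\hat{s}'}\abs{T_d - T}$ is asserted without the volume factor.
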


\subsection{Step 6: Adding it together}
\label{step6}
To combine the bounds we obtained in Step 4 and 5, we need an additional bound that bounds the policy value between the continuous and discretised MDP. 

\begin{lemma} [Simulation Lemma for Continuous MDPs]
\label{lem:simulation_original}
Let $M$ and $\hat{M}$ be two MDPs that only differ in $(T, R)$ and $(\hat{T}, \hat{R})$.

Let $\epsilon_{R} \geq \max _{s, a}|\hat{R}(s, a)-R(s, a)|$ and $\varepsilon_{p} \geq \max _{s, a} \| \hat{T}(\cdot \mid  s, a) - T(\cdot \mid s, a)||_{1}$. Then $\forall \pi: \mathbf{\hat{S}} \rightarrow a$,

$$\left\|V_{M}^{\pi}-V_{\hat{M}}^{\pi}\right\|_{\infty} \leq \frac{\varepsilon_{R}}{1-\gamma}+\frac{\gamma \epsilon_{P} V_{\max }}{2(1-\gamma)}.$$

\end{lemma}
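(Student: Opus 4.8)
The plan is to exploit the fixed-policy Bellman equations in the two MDPs and reduce the claim to a self-referential (contraction-style) inequality in the sup norm. I would fix any policy $\pi$ and write, for each state $\hat{s}$,
$$V_M^\pi(\hat{s}) = R(\hat{s}, \pi(\hat{s})) + \gamma \int_{\hat{S}} T(\hat{s}' \mid \hat{s}, \pi(\hat{s})) \, V_M^\pi(\hat{s}') \, d\hat{s}',$$
and the analogous identity for $V_{\hat{M}}^\pi$ with $\hat{R}, \hat{T}$. Subtracting the two identities and inserting the cross term $\gamma \int \hat{T}(\hat{s}' \mid \hat{s}, \pi(\hat{s})) V_M^\pi(\hat{s}') \, d\hat{s}'$ decomposes the pointwise difference into three pieces: a reward gap $R - \hat{R}$, a transition gap $\gamma \int (T - \hat{T}) V_M^\pi$ evaluated against the \emph{same} value function $V_M^\pi$, and a discounted recursion term $\gamma \int \hat{T} (V_M^\pi - V_{\hat{M}}^\pi)$.

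Next I would bound each piece. The reward gap is at most $\epsilon_R$ by hypothesis, and the recursion term is at most $\gamma \| V_M^\pi - V_{\hat{M}}^\pi \|_\infty$ since $\hat{T}(\cdot \mid \hat{s}, \hat{a})$ is a density that integrates to one. The transition gap is the only delicate piece, and recovering the factor of $\tfrac{1}{2}$ in the statement is the main obstacle: a naive Hölder bound gives only $\gamma \epsilon_P V_{\max}$. The key step is \emph{centering}. Because $\int (T - \hat{T})(\hat{s}' \mid \hat{s}, \hat{a}) \, d\hat{s}' = 0$ (both are densities by Assumption \ref{ass:pdf}), I may replace $V_M^\pi$ by $V_M^\pi - c$ for any constant $c$ without changing the integral; choosing $c = \tfrac{1}{2}(\sup V_M^\pi + \inf V_M^\pi)$ gives $|V_M^\pi(\hat{s}') - c| \leq \tfrac{1}{2}(\sup V_M^\pi - \inf V_M^\pi) \leq \tfrac{1}{2} V_{\max}$, and pairing this with $\| T - \hat{T} \|_1 \leq \epsilon_P$ yields a transition gap bounded by $\tfrac{1}{2}\gamma \epsilon_P V_{\max}$.

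Finally, combining the three bounds and taking the supremum over $\hat{s}$ gives
$$\| V_M^\pi - V_{\hat{M}}^\pi \|_\infty \leq \epsilon_R + \tfrac{1}{2}\gamma \epsilon_P V_{\max} + \gamma \| V_M^\pi - V_{\hat{M}}^\pi \|_\infty,$$
and solving for $\| V_M^\pi - V_{\hat{M}}^\pi \|_\infty$ (legitimate since $\gamma < 1$ and the difference is finite, as both value functions are bounded by $V_{\max}$) yields exactly the claimed bound $\frac{\epsilon_R}{1-\gamma} + \frac{\gamma \epsilon_P V_{\max}}{2(1-\gamma)}$. The only additional care required in the continuous setting is justifying the exchange of integration with the sup norm in the recursion and transition terms, which is immediate from the boundedness $\|V_M^\pi\|_\infty \leq V_{\max}$ together with $T, \hat{T}$ being genuine probability densities; with that in place the argument is identical to the discrete Simulation Lemma.
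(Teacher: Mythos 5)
Your proposal is correct and takes essentially the same route as the paper's proof: the same Bellman-difference decomposition with an inserted cross term, the same centering trick exploiting $\int (T-\hat{T})\,d\hat{s}' = 0$ to gain the factor $\tfrac{1}{2}$ before applying H\"older's inequality, and the same resolution of the self-referential inequality via $\gamma < 1$. The only cosmetic differences are that you center at the midrange $\tfrac{1}{2}(\sup V_M^\pi + \inf V_M^\pi)$ where the paper centers at the fixed constant $\frac{R_{\max}}{2(1-\gamma)}$ (both give $\tfrac{1}{2}V_{\max}$), and that you pair the transition gap with $V_M^\pi$ where the paper pairs it with $V_{\hat{M}}^\pi$, which changes nothing.
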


Under discretisation, \cite{chow1991optimal} showed that, with a small enough grid size, and restricting to the discretised action space, the difference in transition probability of the continuous MDP $M$ and discretised MDP $M_d$ is upper bounded by a constant.
\begin{lemma}\label{lm:discretised_transition}\cite{chow1991optimal}
    There exists a constant $K_P$ (depending only on constant $\mathcal{L}$) such that 
    $$|T_d(\hat{s}'|\hat{s},\hat{a}_d)-T(\hat{s}'|\hat{s},\hat{a}_d)|\leq K_p \alpha$$
    for all $\hat{s}',\hat{s}\in \hat{S}, \hat{a}_d\in \hat{A}_d$ and all $\alpha\leq (0,\frac{1}{2}\mathcal{L}]$
    \label{lem:transitionbound}
\end{lemma}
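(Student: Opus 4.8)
The plan is to expand $T_d$ by its definition in \eqref{eq:discrete_transition} and compare it with $T$ term by term, controlling both the numerator and the normalising denominator through the Lipschitz assumption. Write $T_d(\hat{s}' \mid \hat{s}, \hat{a}_d) = A/Z$, where $A := T(\hat{s}'_d \mid \hat{s}_d, \hat{a}_d)$, the points $\hat{s}_d, \hat{s}'_d$ are the $\varepsilon$-net points associated with $\hat{s}, \hat{s}'$, and $Z := \int_{\hat{S}} T(\hat{z}_d \mid \hat{s}_d, \hat{a}_d)\, d\hat{z}$ is the normaliser. First I would bound the numerator against $B := T(\hat{s}' \mid \hat{s}, \hat{a}_d)$: since the net guarantee \eqref{eq:alpha} gives $\|\hat{s} - \hat{s}_d\|_2 \le \alpha$ and $\|\hat{s}' - \hat{s}'_d\|_2 \le \alpha$, and since $\|\cdot\|_\infty \le \|\cdot\|_2$, Assumption \ref{ass:lips_trans} yields $|A - B| \le \mathcal{L}_T \alpha$.

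Next I would control the denominator $Z$. By Assumption \ref{ass:pdf} the true density integrates to one, $\int_{\hat{S}} T(\hat{z} \mid \hat{s}_d, \hat{a}_d)\, d\hat{z} = 1$, so $Z$ is merely a step-function (Riemann) approximation of this integral. Bounding the integrand difference pointwise by $|T(\hat{z}_d \mid \hat{s}_d, \hat{a}_d) - T(\hat{z} \mid \hat{s}_d, \hat{a}_d)| \le \mathcal{L}_T \alpha$ and integrating over the bounded domain gives $|Z - 1| \le \mathcal{L}_T\, \mathrm{Vol}(\hat{S})\, \alpha$. For $\alpha$ in the allowed range $(0, \tfrac12\mathcal{L}]$ this forces $Z \ge \tfrac12$, so the denominator stays uniformly bounded away from zero.

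The final step combines these via the quotient identity
$$
\frac{A}{Z} - B = \frac{(A - B) + B(1 - Z)}{Z}.
$$
Using $|A - B| \le \mathcal{L}_T \alpha$ from the first step, the pointwise bound $B \le \mathcal{L}$ from Assumption \ref{ass:pdf}, together with $Z \ge \tfrac12$ and $|1 - Z| \le \mathcal{L}_T \mathrm{Vol}(\hat{S}) \alpha$ from the second step, the triangle inequality gives
$$
|T_d(\hat{s}'\mid\hat{s},\hat{a}_d) - T(\hat{s}'\mid\hat{s},\hat{a}_d)| \le \frac{\mathcal{L}_T \alpha + \mathcal{L}\,\mathcal{L}_T \mathrm{Vol}(\hat{S})\,\alpha}{1/2} = K_P\, \alpha,
$$
with $K_P := 2\mathcal{L}_T\bigl(1 + \mathcal{L}\,\mathrm{Vol}(\hat{S})\bigr)$, a constant depending only on the Lipschitz and boundedness data.

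I expect the main obstacle to be the denominator control rather than the numerator: one must first argue that $\hat{S}$ has finite volume (justified here because the $\varepsilon$-net is built on a bounded sphere of Q-tables, so $\mathrm{Vol}(\hat{S})$ is finite) and then keep the Riemann approximation error uniformly small so that $Z$ is bounded away from $0$ \emph{for every} conditioning pair $(\hat{s}_d, \hat{a}_d)$. A secondary, harmless point to reconcile is the mismatch between the $\ell_2$ net guarantee and the $\ell_\infty$ Lipschitz condition, which is absorbed by $\|\cdot\|_\infty \le \|\cdot\|_2$. If one wishes to match the paper's statement that $K_P$ depends only on $\mathcal{L}$, it suffices to fold $\mathcal{L}_T$ and the (fixed) domain volume into a single Lipschitz constant.
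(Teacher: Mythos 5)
Your proof is essentially correct, but it takes a genuinely different route from the paper for a simple reason: the paper does not prove this lemma at all. Both in the main text and in the appendix the statement is imported verbatim by citation to \cite{chow1991optimal} (Chow and Tsitsiklis), with no argument given. You instead reconstruct a self-contained proof from the paper's own ingredients: the definition of $T_d$ in Equation (\ref{eq:discrete_transition}), the Lipschitz assumption on $T$ (Assumption \ref{ass:lips_trans}), and the density bound and normalisation in Assumption \ref{ass:pdf}. Your decomposition $\frac{A}{Z}-B=\frac{(A-B)+B(1-Z)}{Z}$ is the right mechanism, the numerator bound $|A-B|\leq \mathcal{L}_T\alpha$ is correct (the $\ell_2$-to-$\ell_\infty$ comparison is indeed harmless, and the infinity norm over the concatenated triple $(\hat{s}',\hat{s},\hat{a}_d)$ is at most $\alpha$), and your observation that the denominator control is the real content of the lemma is exactly right --- it is why the smallness condition on $\alpha$ and the finiteness of $\mathrm{Vol}(\hat{S})$ (guaranteed here because meta-states are bounded Q-tables, Assumption \ref{ass:bounded_reward}) must appear. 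What your route buys is transparency: an explicit constant $K_P=2\mathcal{L}_T\bigl(1+\mathcal{L}\,\mathrm{Vol}(\hat{S})\bigr)$ in place of an opaque citation, and a proof that works directly with the step-function normalisation the paper actually defines, whereas the cited result is stated in Chow--Tsitsiklis's own discretisation framework, which does not literally match Equation (\ref{eq:discrete_transition}).

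One bookkeeping point deserves care. Your claim that the stated range $\alpha\in(0,\tfrac{1}{2}\mathcal{L}]$ ``forces $Z\geq\tfrac12$'' does not follow from your own estimate: $|Z-1|\leq \mathcal{L}_T\,\mathrm{Vol}(\hat{S})\,\alpha$ yields $Z\geq\tfrac12$ only when $\alpha\leq \frac{1}{2\mathcal{L}_T\,\mathrm{Vol}(\hat{S})}$, which is unrelated to $\tfrac{\mathcal{L}}{2}$ as written. This is almost certainly a transcription artifact in the paper's statement (the natural condition, and the one in the cited source where all bounds are folded into a single constant, is of the form $\alpha\leq\frac{1}{2\mathcal{L}}$), and you partially flag the issue in your closing remark; but as written your proof should state the smallness condition it actually needs rather than asserting that the paper's interval suffices. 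With that condition corrected, the argument is sound.
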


We now apply the Lemma \ref{lem:simulation_original} to bound the difference in value for any discretised policy (i.e. restricting action space to $\hat{A}_d$) in the continuous MDP $M$ and discretised MDP $M_d$.

\begin{lemma}
\label{lemma:simulation}
    Let $M_{\hat{A}_d}=(\hat{S},\hat{A}_d,T, R,\gamma)$ be the continuous MDP $M$ restricted to the discretised action space. Recall the discretised MDP $M_d=(\hat{S},\hat{A}_d,T_d, R_d,\gamma)$. Then for any discretised policy $\pi:\hat{S}\rightarrow \hat{A}_d$,
    $$\|V_{M_{\hat{A}_d}}^\pi - V_{M_d}^\pi \|_{\infty}=\|V_{M}^\pi - V_{M_d}^\pi \|_{\infty}\leq \frac{\mathcal{L_R}\alpha}{1-\gamma}+\frac{\gamma K_p \alpha }{(1-\gamma)^2}$$
\end{lemma}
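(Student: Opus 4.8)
The plan is to obtain the bound as a direct consequence of the Simulation Lemma (Lemma \ref{lem:simulation_original}) applied to the pair $(M_{\hat{A}_d}, M_d)$, which by construction differ only in their transition and reward models while sharing the discretised action space $\hat{A}_d$. First I would dispose of the claimed equality $\norm{V_{M_{\hat{A}_d}}^\pi - V_{M_d}^\pi}_\infty = \norm{V_M^\pi - V_{M_d}^\pi}_\infty$: since any discretised policy $\pi:\hat{S}\to\hat{A}_d$ only ever selects actions lying in $\hat{A}_d$, its induced value function is unaffected by whether the ambient action set is the full $\hat{A}$ or its restriction $\hat{A}_d$, so the two value functions coincide identically. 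It therefore suffices to bound $\norm{V_{M_{\hat{A}_d}}^\pi - V_{M_d}^\pi}_\infty$ via the Simulation Lemma, for which I need a reward-discrepancy constant $\epsilon_R$ and a transition-discrepancy constant $\varepsilon_P$.

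For the reward term I would use the step-function definition $R_d(\hat{s},\hat{a}_d)=R(\hat{s}_d,\hat{a}_d)$ from Equation \ref{eq:discrete_reward}, where $\hat{s}_d$ is the $\varepsilon$-net representative of the cell containing $\hat{s}$. Lipschitz-continuity of the meta-reward (Assumption \ref{ass:lips_rew}) together with the net guarantee $\norm{\hat{s}_d-\hat{s}}_\infty \leq \alpha$ then yields
$$
\abs{R_d(\hat{s},\hat{a}_d)-R(\hat{s},\hat{a}_d)} = \abs{R(\hat{s}_d,\hat{a}_d)-R(\hat{s},\hat{a}_d)} \leq \mathcal{L}_R\norm{\hat{s}_d-\hat{s}}_\infty \leq \mathcal{L}_R\alpha,
$$
uniformly in $(\hat{s},\hat{a}_d)$, so I may take $\epsilon_R = \mathcal{L}_R\alpha$. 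For the transition term I would invoke Lemma \ref{lm:discretised_transition} (the Chow--Tsitsiklis discretisation bound) in its integrated, $L_1$ form: the pointwise estimate $\abs{T_d(\hat{s}'\mid\hat{s},\hat{a}_d)-T(\hat{s}'\mid\hat{s},\hat{a}_d)} \leq K_p\alpha$ corresponds to a total-variation distance of at most $K_p\alpha$ between the two kernels, and since the $L_1$ distance between densities is twice the total-variation distance, $\norm{T_d(\cdot\mid\hat{s},\hat{a}_d)-T(\cdot\mid\hat{s},\hat{a}_d)}_1 \leq 2K_p\alpha =: \varepsilon_P$.

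Finally I would substitute $\epsilon_R = \mathcal{L}_R\alpha$, $\varepsilon_P = 2K_p\alpha$, and the value bound $V_{\max}=\tfrac{1}{1-\gamma}$ from Assumption \ref{ass:bounded_reward} into the Simulation Lemma, giving
$$
\norm{V_M^\pi - V_{M_d}^\pi}_\infty \leq \frac{\mathcal{L}_R\alpha}{1-\gamma} + \frac{\gamma(2K_p\alpha)\tfrac{1}{1-\gamma}}{2(1-\gamma)} = \frac{\mathcal{L}_R\alpha}{1-\gamma} + \frac{\gamma K_p\alpha}{(1-\gamma)^2},
$$
which is exactly the claimed bound.

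The step I expect to be the main obstacle is the transition estimate. The Simulation Lemma consumes an $L_1$ (total-variation) distance between transition kernels, whereas Lemma \ref{lm:discretised_transition} is phrased pointwise, and a naive integration of a pointwise bound over a continuous state space would pick up the (potentially unbounded) volume of $\hat{S}$. The resolution is to read the Chow--Tsitsiklis result at the level of measures rather than densities, where the factor $2$ relating $L_1$ density distance to total variation produces precisely $\varepsilon_P = 2K_p\alpha$ with no stray volume factor; making this identification rigorous --- and checking that $\alpha \in (0,\tfrac{1}{2}\mathcal{L}]$ so that Lemma \ref{lm:discretised_transition} applies --- is the only non-mechanical part of the argument.
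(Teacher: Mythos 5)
Your proposal is correct and follows essentially the same route as the paper's proof: the paper likewise applies Lemma \ref{lem:simulation_original} with $\epsilon_R = \mathcal{L}_R\alpha$ obtained from Assumption \ref{ass:lips_rew} and $\varepsilon_P = 2K_p\alpha$ obtained by converting the pointwise bound of Lemma \ref{lm:discretised_transition} into an $L_1$ bound via total variation, then substitutes $V_{\max} = \frac{1}{1-\gamma}$. The pointwise-to-$L_1$ subtlety you flag is real, but the paper handles it with exactly the same (equally terse) identification $\norm{T_d - T}_1 = 2\,TV(T_d, T) \leq 2K_p\alpha$, so your account matches the paper's argument step for step.
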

Note that, restricted to discretised policies $\pi$ which only picks actions in $\hat{A}_d$, the value of $\pi$ in the original MDP $M$, $V_M^{\pi}$, equals to its value the same MDP restricted to discretised action space, $V_{M_{\hat{A}_d}}^{\pi}$.


\subsection{Case I}
Summing up the bounds in Lemma \ref{lemma:simulation}, Theorems \ref{thm:pac1} and \ref{lemma:disc_error}, we obtain the final bound for Case I. The final bound guarantees, with high probability, that the policy we obtain from R-FOS is close to the optimal policy in $M$ apart from a constant factor. 

\begin{theorem} 
Suppose that $0 \leq \varepsilon<\frac{1}{1-\gamma}$ and $0 \leq \delta<1$ are two real numbers. Let M be any continuous meta-MDP with inner stochastic game $G = \langle S, A, T_\text{inner}, R_\text{inner} \rangle$.
Let us denote $M_d=\langle \hat{S}_d, \hat{A}_d, T_d, R_d, \gamma\rangle$ as discretised version of $M$ (as described in Case I) using grid size of $\lambda$. There exists inputs $m=m\left(\frac{1}{\varepsilon}, \frac{1}{\delta}\right)$ and $\varepsilon_1$, satisfying 
$$m\left(\frac{1}{\varepsilon}, \frac{1}{\delta}\right)=\tilde{O}\left(\frac{\left(\frac{2 \sqrt{n|S||A|}} {\lambda} + 1 \right)^{n|S||A|}}{\varepsilon^2(1-\gamma)^4}\right)$$
and $\frac{1}{\varepsilon_1}=O\left(\frac{1}{\varepsilon}\right)$, such that if $R\text{-}MAX$ is executed on $M$ with inputs $m$ and $\varepsilon_1$, then the following holds. Let $\mathcal{A}_t$ denote $R\text{-}MAX$'s policy at time $t$ and $\hat{s}_t$ denote the state at time $t$. With probability at least $1-\delta$, 
$$ V_{M}^*\left(\hat{s}_t\right) - V_{M}^{\mathcal{A}_t}\left(\hat{s}_t\right) \leq \varepsilon + \frac{\mathcal{K} \lambda}{(1 - \hat{\gamma})^2} + \frac{\mathcal{L_R}\alpha}{1-\gamma}+\frac{\gamma K_p \alpha }{(1-\gamma)^2}$$
is true for all but
$$
\tilde{O}\left(\frac{\left(\frac{2 \sqrt{n|S||A|}} {\lambda} + 1 \right)^{2n|S||A|} \left(\frac{2 \sqrt{|S||A|}} {\lambda} + 1 \right)^{|S||A|}} {\varepsilon^3(1-\gamma)^6}\right)
$$
timesteps. I.e. the above is the final sample complexity.

\end{theorem}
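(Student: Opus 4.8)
The plan is to bound the target quantity $V_M^*(\hat{s}_t) - V_M^{\mathcal{A}_t}(\hat{s}_t)$ by a telescoping triangle inequality that routes through the discretised meta-MDP $M_d$, inserting the two intermediate values $V_{M_d}^*(\hat{s}_t)$ and $V_{M_d}^{\mathcal{A}_t}(\hat{s}_t)$. Concretely I would write the exact identity
$$V_M^*(\hat{s}_t) - V_M^{\mathcal{A}_t}(\hat{s}_t) = \underbrace{[V_M^*(\hat{s}_t) - V_{M_d}^*(\hat{s}_t)]}_{(\mathrm{I})} + \underbrace{[V_{M_d}^*(\hat{s}_t) - V_{M_d}^{\mathcal{A}_t}(\hat{s}_t)]}_{(\mathrm{II})} + \underbrace{[V_{M_d}^{\mathcal{A}_t}(\hat{s}_t) - V_M^{\mathcal{A}_t}(\hat{s}_t)]}_{(\mathrm{III})}.$$
Because the same policy $\mathcal{A}_t$ appears in the first and third values and the same optimal value $V_M^*$, $V_{M_d}^*$ in the others, the three bracketed differences telescope exactly back to the left-hand side with no residual term; each bracket is then controlled by one of the results already established.

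For term $(\mathrm{I})$ I would invoke the discretisation bound (Theorem \ref{lemma:disc_error}), giving $V_M^*(\hat{s}_t) - V_{M_d}^*(\hat{s}_t) \leq \|V_M^* - V_{M_d}^*\|_\infty \leq \frac{\mathcal{K}\lambda}{(1-\hat{\gamma})^2}$. For term $(\mathrm{III})$ the key observation is that the R-FOS policy $\mathcal{A}_t$ only selects actions from the discretised set $\hat{A}_d$, hence it is a discretised policy and the simulation bound (Lemma \ref{lemma:simulation}) applies verbatim, yielding $V_{M_d}^{\mathcal{A}_t}(\hat{s}_t) - V_M^{\mathcal{A}_t}(\hat{s}_t) \leq \|V_M^{\mathcal{A}_t} - V_{M_d}^{\mathcal{A}_t}\|_\infty \leq \frac{\mathcal{L}_R \alpha}{1-\gamma} + \frac{\gamma K_p \alpha}{(1-\gamma)^2}$. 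Both of these are deterministic inequalities holding at every timestep.

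Term $(\mathrm{II})$ is where the probabilistic and sample-complexity content enters: it is precisely the quantity controlled by the Case I PAC-bound (Theorem \ref{thm:pac1}), so under the same choices of $m$ and $\varepsilon_1$, with probability at least $1-\delta$ it is bounded by $\varepsilon$ for all but the stated $\tilde{O}(\cdots)$ timesteps. Summing the three bounds gives exactly the claimed inequality. Since terms $(\mathrm{I})$ and $(\mathrm{III})$ hold unconditionally, the high-probability qualifier and the exceptional-timestep count are inherited solely from term $(\mathrm{II})$, so the final statement carries precisely the probability $1-\delta$ and the sample complexity of Theorem \ref{thm:pac1}. I do not anticipate a genuine obstacle: the proof is an assembly of the three prior bounds, and the only points requiring care are confirming that $\mathcal{A}_t$ is indeed $\hat{A}_d$-valued so that Lemma \ref{lemma:simulation} legitimately applies to term $(\mathrm{III})$, and checking that the discount-factor notation ($\hat{\gamma}$ versus $\gamma$) appearing in the inherited constants is used consistently across the summed terms.
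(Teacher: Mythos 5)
Your proposal is correct and matches the paper's approach: the paper's own proof is the one-line statement that the result ``follows from adding Theorem \ref{thm:pac1}, the discretisation bound, and Lemma \ref{lemma:simulation},'' and your three-term telescoping decomposition through $V_{M_d}^*$ and $V_{M_d}^{\mathcal{A}_t}$ is precisely the implicit argument, spelled out with each bracket assigned to the correct prior result. Your added care about $\mathcal{A}_t$ being $\hat{A}_d$-valued and about the $\hat{\gamma}$ versus $\gamma$ notation is more explicit than what the paper provides, but it is the same proof.
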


\subsection{Case II}
Similarly, summing up the bounds in Lemma \ref{lemma:simulation}, Theorems \ref{thm:pac2} and \ref{lemma:disc_error}, we obtain the final bound for Case II. In Section \ref{section:6}, we also show empirically that the number of samples needed indeed scales by a factor of $|S||A|^{2nh}$, as seen in Theorem \ref{thm:case2}.
\begin{theorem} 
\label{thm:case2}
Suppose that $0 \leq \varepsilon<\frac{1}{1-\gamma}$ and $0 \leq \delta<1$ are two real numbers. Let M be any continuous meta-MDP with inner stochastic game $G = \langle S, A, T_\text{inner}, R_\text{inner} \rangle$.
Let us denote $M_d=\langle \hat{S}_d, \hat{A}_d, T_d, R_d, \gamma\rangle$ as discretised version of $M$ (as described in Case I) using grid size of $\lambda$. There exists inputs $m=m\left(\frac{1}{\varepsilon}, \frac{1}{\delta}\right)$ and $\varepsilon_1$, satisfying 
$$m\left(\frac{1}{\varepsilon}, \frac{1}{\delta}\right)=\tilde{O}\left(\frac{(|S||A|)^{nh}}{\varepsilon^2(1-\gamma)^4}\right)$$
and $\frac{1}{\varepsilon_1}=O\left(\frac{1}{\varepsilon}\right)$, such that if $R\text{-}MAX$ is executed on $M$ with inputs $m$ and $\varepsilon_1$, then the following holds. Let $\mathcal{A}_t$ denote $R\text{-}MAX$'s policy at time $t$ and $\hat{s}_t$ denote the state at time $t$. With probability at least $1-\delta$, 
$$ V_{M}^*\left(\hat{s}_t\right) - V_{M}^{\mathcal{A}_t}\left(\hat{s}_t\right) \leq \varepsilon + \frac{\mathcal{K} \lambda}{(1 - \hat{\gamma})^2} + \frac{\mathcal{L_R}\alpha}{1-\gamma}+\frac{\gamma K_p \alpha }{2(1-\gamma)^2}$$
is true for all but
$$
\tilde{O}\left(\frac{(|S||A|)^{2nh} \left(\frac{2 \sqrt{|S||A|}} {\lambda} + 1 \right)^{|S||A|}} {\varepsilon^3(1-\gamma)^6}\right) \text{ timesteps}
$$
\end{theorem}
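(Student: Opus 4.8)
The plan is to prove the statement by a triangle-inequality decomposition that threads the three previously established bounds through a common chain of intermediate value functions. Writing $\mathcal{A}_t$ for the policy returned by R-FOS at time $t$, I would split the target quantity as
\begin{align*}
V_M^*(\hat{s}_t) - V_M^{\mathcal{A}_t}(\hat{s}_t)
&= \left[V_M^*(\hat{s}_t) - V_{M_d}^*(\hat{s}_t)\right] \\
&\quad + \left[V_{M_d}^*(\hat{s}_t) - V_{M_d}^{\mathcal{A}_t}(\hat{s}_t)\right] \\
&\quad + \left[V_{M_d}^{\mathcal{A}_t}(\hat{s}_t) - V_M^{\mathcal{A}_t}(\hat{s}_t)\right],
\end{align*}
and call the three bracketed terms (I), (II), and (III). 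Each is then controlled by exactly one earlier result: (I) by the discretisation bound of Theorem \ref{lemma:disc_error}, (II) by the Case~II PAC-bound of Theorem \ref{thm:pac2}, and (III) by the simulation bound of Lemma \ref{lemma:simulation}.

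First I would bound (I) deterministically by $\|V_M^* - V_{M_d}^*\|_\infty \le \frac{\mathcal{K}\lambda}{(1-\gamma)^2}$, which holds at every state and in particular at $\hat{s}_t$. For (III), the key observation is that $\mathcal{A}_t$ is by construction a \emph{discretised} policy: R-FOS runs $R_\text{MAX}$ inside the discretised MDP, so every meta-action it emits lies in $\hat{A}_d$. This is exactly the hypothesis of Lemma \ref{lemma:simulation}, and it simultaneously justifies the identity $V_M^{\mathcal{A}_t} = V_{M_{\hat{A}_d}}^{\mathcal{A}_t}$ noted after that lemma, so that $V_M^{\mathcal{A}_t}$ is well-defined when the discretised-action policy is evaluated in the continuous MDP. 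Hence (III) is bounded deterministically by the two discretisation-error terms $\frac{\mathcal{L}_R \alpha}{1-\gamma} + \frac{\gamma K_p \alpha}{2(1-\gamma)^2}$ of the theorem statement. Term (II) contributes $\varepsilon$ through Theorem \ref{thm:pac2}. Summing the three contributions yields the claimed error $\varepsilon + \frac{\mathcal{K}\lambda}{(1-\gamma)^2} + \frac{\mathcal{L}_R \alpha}{1-\gamma} + \frac{\gamma K_p \alpha}{2(1-\gamma)^2}$.

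The probabilistic and ``all-but-$N$-timesteps'' structure of the final statement comes entirely from (II): since (I) and (III) hold at every timestep with certainty, the only source of the $1-\delta$ confidence and of the exceptional timestep count is Theorem \ref{thm:pac2}. I would therefore conclude that the combined inequality holds with probability at least $1-\delta$ for all but the timestep count inherited from the Case~II PAC-bound. For the sample-complexity factor I would substitute the meta-MDP cardinalities into the $S^2 A$ scaling of the $R_\text{MAX}$ guarantee (Theorem \ref{eq:strehl}): taking the meta-state count $|\hat{S}_d| = (|S||A|)^{nh}$ from Equation \ref{eq:metastatesize2} and the meta-action count $|\hat{A}_d| \le \left(\frac{2\sqrt{|S||A|}}{\lambda} + 1\right)^{|S||A|}$ from Equation \ref{eq:metaactionsize}, the factor $|\hat{S}_d|^2\,|\hat{A}_d|$ reproduces the stated $(|S||A|)^{2nh}\left(\frac{2\sqrt{|S||A|}}{\lambda} + 1\right)^{|S||A|}$ dependence, divided by $\varepsilon^3(1-\gamma)^6$.

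The step I expect to be the main obstacle is the careful reconciliation of the probabilistic qualifier across the three terms: I must ensure the ``for all but $N$ timesteps'' exceptional set refers to the same realised policy sequence $\{\mathcal{A}_t\}$ in all three bounds, and that the deterministic bounds on (I) and (III) genuinely hold uniformly over whatever (possibly time-varying) policy R-FOS holds at each timestep, so that they add to the single probabilistic event of (II) without inflating $\delta$. A secondary subtlety, already flagged above, is guaranteeing that evaluating the discretised policy in the continuous MDP is meaningful; this is handled by the restriction identity $V_M^{\mathcal{A}_t} = V_{M_{\hat{A}_d}}^{\mathcal{A}_t}$, but it must be invoked explicitly for term (III) to be interpretable.
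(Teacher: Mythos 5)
Your proposal is correct and takes essentially the same route as the paper, whose proof of this theorem is simply ``the steps are similar to Case I,'' i.e., adding the PAC-bound of Theorem \ref{thm:pac2}, the discretisation bound of Theorem \ref{lemma:disc_error}, and Lemma \ref{lemma:simulation}; your triangle-inequality decomposition, the restriction identity $V_M^{\mathcal{A}_t} = V_{M_{\hat{A}_d}}^{\mathcal{A}_t}$ for term (III), and the observation that the $1-\delta$ probability and the exceptional-timestep count come solely from term (II) make explicit precisely the bookkeeping the paper omits. One caveat: Lemma \ref{lemma:simulation} actually yields $\frac{\gamma K_p \alpha}{(1-\gamma)^2}$ rather than the $\frac{\gamma K_p \alpha}{2(1-\gamma)^2}$ appearing in the theorem statement (a factor-of-two inconsistency internal to the paper, compare Case I), so term (III) in your argument should carry the lemma's constant rather than the statement's.
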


\section{Experiments} \label{section:6}

We now validate our theoretical findings empirically.

\subsection{The Matching Pennies Environment}
\textbf{Matching Pennies} is a two-player, zero-sum game with a payoff matrix shown in \Cref{tb:mp}. Each agent either pick Heads (H) or Tails (T), $a^i\in\{H, T\}$ and $a^i\sim\pi_{\phi^i}(\cdot\mid\{\})$, where $\phi^i$ correspond to the probability of player $i$ picking H. Note that in this work, the game is not iterated. This means that an inner-episode has a length of 1 and the inner-episodic return corresponds to the payoff after one interaction $r=\text{Payoff Table}(a^1, a^2)$. For R-FOS, this means that a meta-step corresponds to one iteration of the Matching Pennies game. The meta-return corresponds to the discounted, cumulative meta-reward after playing the Matching Pennies $K$ times. While the original M-FOS was evaluated on a more complex, iterated version of the Matching Pennies game, this simple setting with a binary action space is sufficient for our empirical validation. Our setting is also more practical for implementation because the $R_\text{MAX}$ algorithm memory usage grows exponentially with the size of the state and action space.  Thus, for any of the more complex environments from the M-FOS paper we were not able do any empirical analysis of R-FOS at all, due to the exponential sample and memory requirements.

\begin{table}[!htb]
\centering
\begin{tabular}{c|cc}
Player 1\textbackslash Player 2 & Head     & Tail     \\ \hline
Head                            & (+1, -1) & (-1, +1) \\ 
Tail                            & (-1, +1) & (+1, -1) \\ 
\end{tabular}
\label{tb:mp}
\caption{Payoff Matrix for MP}
\end{table}

\subsection{\textbf{Experiment Setup}}

We implement an empirical version of our R-FOS algorithm. Because the R-FOS algorithm uses Q-value iteration to solve the meta-game, the algorithm needs to keep a copy of the meta-Q-value table. Therefore, memory usage grows exponentially with respect to the inner-game's state-action space size. We found that Case I of the algorithm was intractable to implement even with a compact environment like MP. The meta Q-table of size $|\hat{S}| \times |\hat{A}|$ was simply too large to fit in memory. Therefore, we focus on empirically validating a simplified case of Case II. We make two simplifications,
\begin{enumerate}
    \item The meta-state uses a partial history of past actions. Only the most recent $h$ actions are used, where $h$ is a hyper-parameter we pick. The window size allows us to control the size of the meta-game state, i.e., $\hat{{S}}\in\mathbb{R}^{2^h}$. Because the MP game only has one state, it is not necessary to include the state. 
    \item To further decrease the problem size for tractability, we define the meta-agent action to be the inner-agent's greedy action, instead of the Q-table. This results in a much small meta-action size of $|\hat{A}| = 2$
\end{enumerate}


\section{Results and Discussion}\label{section:7}

\begin{figure}[!htb]
    \centering\includegraphics[width=0.99\columnwidth]{../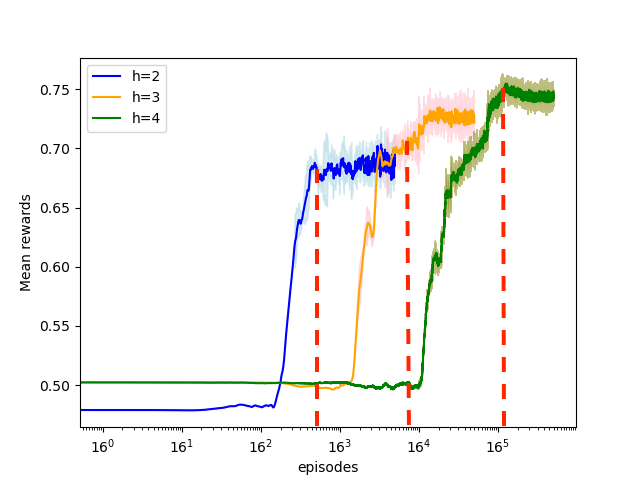}
    \caption{Empirical sample complexity while varying the trajectory window $h$. We plot the meta-reward per meta-episode. To better visualise the connection with the theory results, we plot the x-axis in $\log_{16}$ scale. The reported results are the mean over 3 seeds with standard error.}
    \label{fig:rewc}
\end{figure}

We draw connections between our sample complexity theory results and experimental results in the MP environment. Our goal is to analyse the scaling law of R-FOS. Specifically, we investigate how the sample complexity changes when we vary the window-size $h$. Under the MP environment settings, the inner-game state-action space size is $|{S}||{A}| = 2$  and the number of players is $n=2$.
Following the bound in Theorem \ref{thm:case2}, we see that the only term that depends on h is the $16^h$ term:
\vspace{-0.005pt}
$$
\tilde{O}\left(\frac{(|S||A|)^{2nh} \left(\frac{2 \sqrt{|S||A|}} {\lambda} + 1 \right)^{|S||A|}} {\varepsilon^3(1-\gamma)^6}\right) \sim
\tilde{O}\left(\frac{16^{h} \left(\frac{2 \sqrt{2}} {\lambda} + 1 \right)^2} {\varepsilon^3(1-\gamma)^6}\right)
$$

Hence, our theory results says that whenever the game horizon is increased by 1, we expect to see the sample complexity to increase by a factor of 16 in the MP environment.  
Figure \ref{fig:rewc} shows the reward across the meta episodes on a log $16$ scale. The graph contains three reward curves for meta-trajectory length $h = 2,3,4$, which converges approximately at $16^3, 16^4, and 16^5$ episodes. Indeed, this is consistent with our theoretical results in Theorem \ref{thm:pac2}.

\section{Conclusion}
We presented three main contributions in our work. First of all, we presented R-FOS, a tabular algorithm adapted from M-FOS. Unlike M-FOS, which learns a policy in a continuous meta-MDP, R-FOS instead learns a policy in a discrete approximation of the original meta-MDP which allows us to more easily perform theoretical analysis.  Within this discretised meta-MDP, R-FOS uses the $R_\text{MAX}$ algorithm as the meta-agent. We adapted R-FOS from M-FOS such that it still maintains all key attributes of M-FOS. Second of all, we derived an exponential sample complexity bound for both cases described in M-FOS (the two cases being either inner-game policies or inner-game trajectory history as meta-state). Specifically, we proved that with high probability, the policy learnt by R-FOS is close to the optimal policy from the original meta-MDP up to a constant distance. Finally, we implemented R-FOS and investigated the empirical sample complexity in the Matching Pennies environment. We draw connections between theory and experiments by showing both results scales exponentially according to the size of the inner-game's state-action-space.



\begin{acks}
\texttt{KF was supported by the D. H. Chen Foundation Scholarship. QZ is supported by Armasuisse and Cohere.} 

\end{acks}



\clearpage
\newpage
\bibliographystyle{ACM-Reference-Format} 
\bibliography{Citation}

\clearpage
\newpage
\onecolumn
\appendix
\appendix

\section{Overview}

\subsection{Problem Setup}

The \textbf{meta-game} is defined by a continuous MDP $M = \langle\hat{S},\hat{A}, T, R, \gamma\rangle$ with finite horizon $h$.

For the remaining of the proof, we consider two ways to formulate the meta-state space and meta-action space:
\begin{itemize}
    \item In Case I, the meta-state is all inner agents' policies' parameters from the previous timestep, and the meta-action is the inner agent's current policy parameters.
    \item In Case II, the only difference with Case I is the meta-state is instead all inner agents' trajectories. 
\end{itemize}
See Table \ref{tb:space_representation} for a summary for the two cases.

\begin{table}[!htb]
\centering
\renewcommand{\arraystretch}{1.2} 
\setlength{\tabcolsep}{20pt} 
\caption{Two cases of representing the meta-state space and meta-action space.}
\begin{tabular}{|c|c|c|}
\hline
 & $\hat{s}_t =$ & $\hat{a}_t =$ \\ \hline
Case I & $\bm{\phi}_{t-1} = [\phi_{t-1}^{i}, \bm{\phi_{t-1}^{-i}}]$ & $\phi^{i}_t$ \\ \hline
Case II & $\bm{\tau}_t$ & $\phi^{i}_t$ \\ \hline
\end{tabular}
\label{tb:space_representation}
\end{table}

The \textbf{inner-game} is an n-player fully-observable discrete stochastic game $G = \langle S, A, T_\text{inner}, R_\text{inner} \rangle$ with finite horizon $h$.

\subsection{Theory Overview}
We derive the sample complexity of our R-FOS algorithm. On a high-level, the proof consists of six steps.
\begin{enumerate}
    \item We discretise our MDP $M$ into $M_d$. We first descretise the continuous meta-state space and meta-action space using epsilon-nets \cite{chow1991optimal}. Based on the descretised meta-state and meta-action space, we then define the discretised transition and reward function. 
    \item We then construct a $m$-known discretised MDP $M_{m,d}$, as described by the R-MAX algorithm\cite{pacmdp}.
    \item Then, we estimate the empirical $m$-known discretised MDP $\hat{M}_{m,d}$ using maximum likelihood estimate. This is the same procedure described by the R-MAX algorithm\cite{pacmdp}. Our algorithm, R-FOS, learns an optimal policy in $\hat{M}_{m,d}$.
    \item We first prove a PAC-bound between the optimal policies learnt in $\hat{M}_{m,d}$ and $M_d$. This step uses results from \citep{pacmdp}.
    \item We then prove a bound between the optimal policies learnt in $M_d$ and $M$. This step uses results from \citep{chow1991optimal}.
    \item We obtain the final PAC-bound building from the two bounds from above. 
\end{enumerate}

\section{Nomenclature}

\begin{table}[!htb]
\label{tb:notation}
\begin{tabular}{p{0.13\textwidth} p{0.85\textwidth}}
\toprule
Symbol & Definition \\
\midrule
$\hat{s}_t = \hat{s}$             & Meta-state at time $t$, time subscript $t$ is omitted for convenience  \\
$\hat{a}_t = \hat{a}$             & Meta-action at time $t$, time subscript $t$ is omitted for convenience \\
$(\hat{s}_d, \hat{a}_d)$          & discretised state-action pair in meta-game \\
$\hat{r}_d = \hat{r}(\hat{s}_d, \hat{a}_d)$   & Meta-reward function parameterised by discretised meta-state-action pair \\
$\phi^i_t = \phi^i$               & The set of inner-game policy parameters of our agent at time $t$, time subscript $t$ is omitted for convenience  \\
$\phi^i_{t,d} = \phi^i_d$           & The set of discretised inner-game policy parameters of our agent at time $t$, time subscript $t$ is omitted for convenience\\
$\bm{\phi}^{-i}$                  & The set of inner-game policy parameters of all agents except our agent at time $t$, time subscript $t$ is omitted for convenience \\
$\bm{\phi^{-i}_d}$      & The set of discretised inner-game policy parameters of all agents except our agent at time $t$, time subscript $t$ is omitted for convenience \\
$\hat{R}(\hat{s}, \hat{a}), \hat{T}(\hat{s}, \hat{a})$ & Empirical estimate of reward and transition distribution \\
$R(\hat{s}, \hat{a}), T(\hat{s}, \hat{a})$ & True reward and transition distribution \\

\bottomrule
\end{tabular}
\vskip -0.1in
\end{table}

\section{\textbf{Assumptions}}
We first outline all assumptions made in deriving the sample complexity of the R-FOS algorithm.

To establish the bound in step 5, we make the following assumptions.


\begin{assumption} \label{ass:timestep}
Both meta-game and inner-game are finite horizon. We use $h$ to denote the meta-game horizon, and $h_{\text{inner}}$ to denote the inner-game horizon.
\end{assumption}

\begin{assumption} \label{ass:gamma}
The meta-game uses a discount factor of $\gamma$. For simplicity of the proof, assume the inner-game uses a discount factor of $1$. Although this assumption can be easily omitted by substituting $R_\text{MAX}$ in the original proof in \citep{pacmdp}.
\end{assumption}

\begin{assumption}
\label{ass:bounded_reward}
We assume the inner-game reward is bounded. For simplicity of the proof, we set this bound as $\frac{1}{h_\text{inner}}$, where $h$ is the horizon of the inner game. Formally, for all $(s, a), 0 \leq R_\text{inner}(s, a) \leq \frac{1}{h}$. This allows us to introduce the notion of maximum reward and maximum value function as $R_{\max, \text{inner}} = \frac{1}{h_\text{inner}}$ and $V_{\max, \text{inner}} = 1$ respectively. This implies the reward and value function in the meta-game are also bounded, i.e., $R_{\max} = 1$ and $V_{\max} = \frac{1}{1-\gamma}$.

\end{assumption}

\begin{assumption} \label{ass:pomdp}
The inner game is assumed to be discrete.
\end{assumption}

To establish the bound in step 6, we make the following assumptions.

\begin{assumption} \label{ass:lips_rew}
The meta-reward function is Lipschitz-continuous: For all $ \hat{s}_1, \hat{s}_2\in \hat{S} \text{ and } \hat{a}_1, \hat{a}_2 \in \hat{A},$
$$
\abs{R(\hat{s}_1, \hat{a}_1) - R(\hat{s}_2, \hat{a}_2)} \leq \mathcal{L}_R \norm{(\hat{s}_1, \hat{a}_1) - (\hat{s}_2, \hat{a}_2)}_\infty, \quad \forall \hat{s}_1, \hat{s}_2 \in \hat{S} \text{ and } \hat{a}_1, \hat{a}_2 \in \hat{A}
$$
where $\mathcal{L}_R$ is the meta-reward function's Lipschitz-constant.
\end{assumption}

\begin{assumption} \label{ass:lips_trans}
The meta-transition function is Lipschitz-continuous: For all $ \hat{s}_1, \hat{s}_1^\prime, \hat{s}_2\in \hat{S} \text{ and } \hat{a}_1, \hat{a}_1^\prime, \hat{a}_2 \in \hat{A},$
$$
\abs{T(\hat{s}_1^{\prime} \mid \hat{s}_1, \hat{a}_1) - T(\hat{s}_2^{\prime} \mid \hat{s}_2, \hat{a}_2)} \leq \mathcal{L}_T \norm{(\hat{s}_1^\prime, \hat{s}_1, \hat{a}_1) - (\hat{s}_2^\prime, \hat{s}_2, \hat{a}_2)}_\infty
$$
where $\mathcal{L}_T$ is the meta-transition function's Lipschitz-constant.
\end{assumption}

\begin{assumption} \label{ass:point_to_set}
There's a Lipschitz-continuous point-to-set mapping between meta-state space and meta-action space such that for any $\hat{s}, \hat{s}^\prime \in \hat{S}$ and $\hat{a}, \hat{a}^\prime \in \hat{A}$, there exists some $\hat{a} \in U(\hat{s})$ such that $\norm{\hat{a} - \hat{a}^\prime}_\infty < \mathcal{L} \norm{\hat{s} - \hat{s}^\prime}_\infty$
\end{assumption}

\begin{assumption} \label{ass:pdf}
The meta-game transition function $T(\cdot \mid \hat{s}, \hat{a})$ is a probability density function such that $0 \leq T(\hat{s}^\prime \mid \hat{s}, \hat{a}) \leq \mathcal{L} \text{ and } \int_{\hat{S}} T(\hat{s}^\prime \mid \hat{s}, \hat{a}) \, d\hat{s}^\prime = 1, \quad \forall \hat{s}, \hat{s}^\prime \in \hat{S} \text{ and } \hat{a} \in \hat{A}$
\end{assumption}

\section{\textbf{Step 1: Discretisation with $\varepsilon$-Net}}
\begin{figure}[h]
\centering
\includegraphics[width=0.27\columnwidth]{../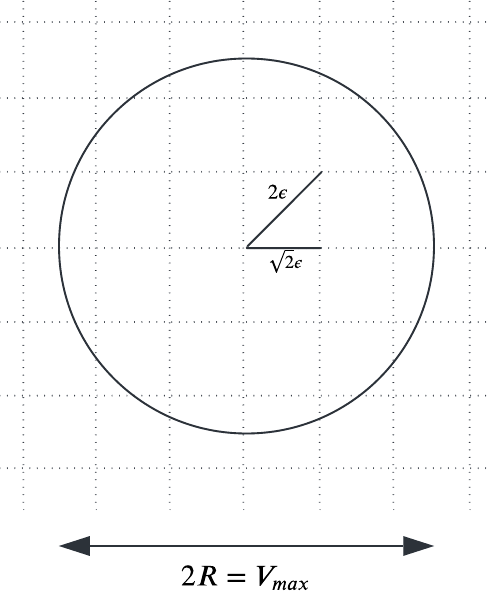}
\caption{$\varepsilon$-Net for $\Theta = \{\theta \in \mathbb{R}^2: \norm{\theta} \leq R\}$ \cite{net}}
\label{fig:net}
\end{figure}

\label{app:epsilonnet}
To apply the R-MAX algorithm, we first convert the MDP $M$ in M-FOS into a tabular MDP $M_d$.

\begin{definition}($\varepsilon$-Net \cite{net}) For $\varepsilon>0$, $\mathcal{N}_\varepsilon$ is an $\varepsilon$-net over the set $\Theta \subseteq \mathbb{R}^D$ if for all $\theta \in \Theta$, there exists $\theta' \in \mathcal{N}_\varepsilon$ such that $\norm{\theta -\theta'}_2 \leq \varepsilon$.

\end{definition}

To discretise a $D$-dimensional sphere of radius $R$, we use a $\varepsilon$-net containing $D$-dimensional cubes of sides $\lambda$. 
This results in a $\left(\frac{2R}{\lambda} + 1\right)^D$ points. Within each $D$-dimensional cube, the largest distance between the vertices and the interior points comes from the center of the cube, which is $\frac{\lambda\sqrt{d}}{2}$. Therefore, to guarantee a full cover of all the points in the sphere, the largest cube size that we can have should satisfy $\varepsilon = \frac{\lambda\sqrt{d}}{2}$. \Cref{fig:net} illustrates an example of using $\varepsilon$-nets to discretise the input space $\Theta = \{\theta \in \mathbb{R}^2: \|\theta\| \leq R\}$. From here on, we will replace the $\varepsilon$ in $\varepsilon$-net with $\alpha$ to avoid notation overloading.

\subsection{Discretisation of the State and Action Space: Case I}
In \textit{Case I}, the meta-state $\hat{s}_{t}$ is all inner agents' policies parameters from the previous timestep. Each of the inner agent $i$'s policy is a Q-table, denoted as $\phi_i \in \mathbb{R}^{|S| \times |A|}$.
Formally, $\hat{s}_{t} := \bm{\phi}_{t-1} = [\phi_{t-1}^{i}, \bm{\phi_{t-1}^{-i}}].$ The meta-action $\hat{a}_t$ is the inner agent's current policy parameters $\phi^{i}_t$. 

For the meta-action space $\hat{A}$ and a chosen discretisation error $\alpha>0$ , we obtain the $\varepsilon$-net $\hat{A}_d \subset \hat{A}$ such that for all $\hat{a} \in \hat{A}$, there exist $\hat{a}_d \in \hat{A}_d$ where 
\begin{equation}
\label{eq:alpha}
    \norm{\hat{a} - \hat{a}_d} \leq \alpha.
\end{equation}

We can infer $\hat{A}$ has dimension $D = |S| |A|$ and Radius $R =  \sqrt{ 1 \cdot |S||A|} = \sqrt{|S||A|}$ (Assumption \ref{ass:bounded_reward}). Dividing the space with grid size $\lambda$ results in the size of discretised meta-action space upper to be bounded by
\begin{equation}
    \label{eq:metaactionsize}
    |\hat{A}_d| \leq \left(\frac{2 \sqrt{|S||A|}} {\lambda} + 1 \right)^{|S||A|}.
\end{equation}

Similarity, we can also infer $\hat{S}$ has dimension $D = n|S||A| $ and Radius $R =  \sqrt{ 1 \cdot n|S||A|} = \sqrt{n|S||A|}$.
Dividing the space with grid size $\lambda$ results in the size of discretised meta-state space upper to be bounded by
\begin{equation}
    \label{eq:metastatesize1}
    |\hat{S}_d| \leq \left(\frac{2 \sqrt{n|S||A|}} {\lambda} + 1 \right)^{n|S||A|}.
\end{equation}

\subsubsection{Discretisation of the State and Action Space: Case II}

In Case II, the meta-state $\hat{s}_t$ is all inner agents' trajectories. Formally, $\hat{s}_{t} := \bm{\tau}_t.$, where $\tau^i_t = \{s_0, a_0, s_1, a_1, ..., s_t, a_t\}$. Because we assume the Inner-Game is discrete (i.e. the state and action space are both discrete), the meta-state in this case does not need discretisation. Thus, we can directly obtain the meta-state space, which is,

\begin{equation}
\label{eq:metastatesize2}
    |\hat{S}_t| = (|S||A|)^{nh}.
\end{equation}

The meta-action remains same as Case I.

\subsection{Discretisation of the Transition and Reward Function}
Under the above discretisation procedure, we define the discretised MDP $M_d=(\hat{S}, \hat{A}_d, T_d, R_d, \gamma)$, where the state space remains continuous, the action space is restricted to discretised actions. We define the transition function and reward function for $M_d$ as:

\begin{equation}
    \label{eq:discrete_transition}
    T_d(\hat{s}'|\hat{s},\hat{a}_d) = \frac{T(\hat{s}'|\hat{s}_d, \hat{a}_d)}{\int_{\hat{S}} T(\hat{z}_d|\hat{s}_d, \hat{a}_d)d\hat{z}}
\end{equation}
We can view $T_d(\hat{s}'|\hat{s}_d,\hat{a}_d)$ as a normalized sample of $T_d(\cdot|\cdot,\hat{a}_d)$ at $\hat{s}',\hat{s}_d$.

\begin{equation}
    \label{eq:discrete_reward}
    R_d(\hat{s},\hat{a}_d) = R(\hat{s}_d, \hat{a}_d)
\end{equation}


\section{Step 2: The $m$-known Discretised MDP}
In the last step, we converted the meta-MDP $M$ into a discretised meta-MDP $M_d$. From $M_d$, R-FOS builds a $m$-known discretised MDP $M_{m,d}$. 
\begin{definition}[m-Known MDP]
\label{def:known}
Let $M_d=\langle\hat{S}_d, \hat{A}_d, T_d, R_d, \gamma\rangle$ be a MDP. We define $M_m,d$ to be the $m$-known MDP (See \Cref{tb:theMs_appendix}), where $m$-known is the set of state-action pairs that has been visited at least $m$ times. For all state-action pairs in $m$-known, the induced MDP $M_{m,d}$ behaves identical to $M_d$. For state-action pairs outside of $m$-known, the state-action pairs are self-absorbing and maximally rewarding. 

\end{definition}

\begin{table}[!htb]
\centering
\renewcommand{\arraystretch}{1.5}
\begin{tabular}{|c|c|c|c|c|}
\hline
 & \multicolumn{1}{p{2cm}|}{\centering Ground Truth MDP $M$} & \multicolumn{1}{p{2cm}|}{\centering Discretised MDP $M_d$} & \multicolumn{1}{p{2.5cm}|}{\centering $m$-known Discretised MDP $\hat{M}_{m,d}$} & \multicolumn{1}{p{2.5cm}|}{\centering Empirical $m$-known Discretised MDP $\hat{M}_{m,d}$} \\
\hline
Known & = $M$ & = $M_d$ & = $M_d$ & $\approx M_d$ \\
\hline
Unknown & = $M$ & = $M_d$ & \multicolumn{2}{c|}{self-loop with maximum reward} \\
\hline
\end{tabular}
\caption{Relationship between $M, M_d, M_{m,d}, \hat{M}_{m,d}$}
\label{tb:theMs_appendix}
\end{table}

\section{Step 3: The Empirical Discretised MDP}

\begin{definition}[Empirical m-Known MDP]
\label{def:known}
$M_m$ is the expected version of $\hat{M}_m$ where:

\begin{center}
\begin{equation}
\begin{aligned}
    T_{m,d}(\hat{s}_d^{\prime} \mid \hat{s}_d, \hat{a}_d) & :=\left\{\begin{array}{lll} T_d(\hat{s}_d^{\prime} \mid \hat{s}_d, \hat{a}_d) & \text { if } (\hat{s}_d, \hat{a}_d) \in \text{m-known}\hfill \\ \mathbbm{1}[\hat{s}_d^{\prime}=\hat{s}_d], & \text { otherwise}\hfill & \end{array}\right.\\
    \hat{T}_{m,d}(\hat{s}_d^{\prime} \mid \hat{s}_d, \hat{a}_d) & := \begin{cases} \frac{n(\hat{s}_d, \hat{a}_d, \hat{s}_d^{\prime})}{n(\hat{s}_d, \hat{a}_d)}, & \text { if }(\hat{s}_d, \hat{a}_d) \in \text{m-known}\hfill \\ \mathbbm{1}[\hat{s}_d^{\prime}=\hat{s}_d], & \text { otherwise }\hfill\end{cases} \\
     R_{m,d}(\hat{s}_d, \hat{a}_d)& :=\left\{\begin{array}{lll} R_d(\hat{s}_d, \hat{a}_d), & \text { if } (\hat{s}_d, \hat{a}_d) \in \text{m-known}\hfill \\ R_{\max} & \text { otherwise}\hfill\end{array}\right. \\
     \hat{R}_{m,d}(\hat{s}_d, \hat{a}_d)&= \begin{cases}\frac{\sum_{i}^{n(\hat{s}_d, \hat{a}_d)}r(\hat{s}_d, \hat{a}_d)}{n(\hat{s}_d, \hat{a}_d)}, & \text { if }(\hat{s}_d, \hat{a}_d) \in \text{m-known}\hfill \\ R_{\max}, & \text { otherwise }\hfill\end{cases} \\
\end{aligned}
\end{equation}
\end{center}

\end{definition}
$\hat{R}_{m,d}(\hat{s}_d, {\hat{a}_d})$ and $\hat{T}_{m,d}\left(\hat{s}_d^{\prime} \mid \hat{s}_d, {\hat{a}_d}\right)$ are the maximum-likelihood estimates for the reward and transition distribution of state-action pair $(s_d,a_d)$ with $n(s_d,a_d) \geq m$ observations of $(s_d,a_d)$.

\section{Step 4: The Bound Between $M_d$ and $\hat{M}_{m,d}$}

\begin{theorem} (R-MAX MDP Bound \cite{pacmdp})
\label{eq:strehl}
Suppose that $0 \leq \varepsilon<\frac{1}{1-\gamma}$ and $0 \leq \delta<1$ are two real numbers and $M=\langle S, A, T, R, \gamma\rangle$ is any MDP. There exists inputs $m=m\left(\frac{1}{\varepsilon}, \frac{1}{\delta}\right)$ and $\varepsilon_1$, satisfying $m\left(\frac{1}{\varepsilon}, \frac{1}{\delta}\right)=O\left(\frac{(|S|+\ln (|S| |A| / \delta)) V_{\max }^2}{\varepsilon^2(1-\gamma)^2}\right)$ and $\frac{1}{\varepsilon_1}=O\left(\frac{1}{\varepsilon}\right)$, such that if $R\text{-}MAX$ is executed on $M$ with inputs $m$ and $\varepsilon_1$, then the following holds. Let $A_t$ denote $R\text{-}MAX$'s policy at time $t$ and $s_t$ denote the state at time $t$. With probability at least $1-\delta$, $V_M^{A_t}\left(s_t\right) \geq V_M^*\left(s_t\right)-\varepsilon$ is true for all but
$$
\tilde{O}\left(|S|^2 |A| /\left(\varepsilon^3(1-\gamma)^6\right)\right)
$$
timesteps.

\end{theorem}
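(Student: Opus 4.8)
Since this statement is quoted verbatim from \cite{pacmdp}, in practice I would simply cite that analysis; but to reconstruct it I would follow the standard PAC-MDP template for model-based learners \cite{pacmdp, rmax}, instantiated for $R_\text{MAX}$. The central objects are the \emph{known set} $K$ of state-action pairs visited at least $m$ times, the \emph{known MDP} $M_K$ (which agrees with $M$ on $K$ and is self-absorbing with reward $R_{\max}$ off $K$), and its empirical version $\hat{M}_K$ built from the observed counts. Because $R_\text{MAX}$ acts greedily with respect to the value function solved in $\hat{M}_K$, at every time $t$ its policy $A_t$ is, up to the planning tolerance $\varepsilon_1$, optimal for $\hat{M}_K$, so the whole problem reduces to showing that acting optimally in $\hat{M}_K$ is near-optimal in $M$ on all but polynomially many steps.

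First I would establish \emph{accuracy}: after $m$ visits, the empirical reward $\hat{R}$ and transition $\hat{T}$ on each known pair are close to the truth. A Hoeffding bound controls $|\hat{R}-R|$, while the error $\|\hat{T}(\cdot\mid s,a)-T(\cdot\mid s,a)\|_1$ over the $|S|$ successor states is controlled by the concentration of an empirical distribution, which needs $m=\tilde{O}(|S|/\varepsilon_1^2)$ samples. A union bound over the $|S||A|$ pairs, with $V_{\max}=1/(1-\gamma)$ folded in, yields the stated $m=\tilde{O}((|S|+\ln(|S||A|/\delta))V_{\max}^2/(\varepsilon^2(1-\gamma)^2))$ and total failure probability at most $\delta$. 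The Simulation Lemma (Lemma~\ref{lem:simulation_original}) then gives $\|V^{\pi}_{\hat{M}_K}-V^{\pi}_{M_K}\|_\infty=O(\varepsilon)$ for every $\pi$, and \emph{optimism}---the fact that off-$K$ pairs are maximally rewarding---yields $V^*_{\hat{M}_K}(s)\ge V^*_M(s)-O(\varepsilon)$.

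Next I would use the \emph{explore-or-exploit} decomposition. Writing $H=\tilde{O}(1/(1-\gamma))$ for the $\varepsilon$-effective horizon and letting $p_t$ be the probability that running $A_t$ from $s_t$ for $H$ steps leaves $K$, the generalised (induced) simulation inequality gives $V^{A_t}_M(s_t)\ge V^*_M(s_t)-O(\varepsilon)-V_{\max}\,p_t$. Hence whenever $p_t$ falls below a threshold $\kappa$ of order $\varepsilon(1-\gamma)/V_{\max}$, the step is $\varepsilon$-optimal (exploitation). It remains to bound the number of steps with $p_t>\kappa$ (exploration). Each unknown pair enters $K$ after at most $m$ visits, so at most $m|S||A|$ escapes ever occur; an Azuma--Hoeffding argument over the escape-indicator sequence converts ``$p_t>\kappa$ on many steps'' into ``$\ge m|S||A|$ actual escapes,'' bounding the number of non-$\varepsilon$-optimal steps by $\tilde{O}(m|S||A|/\kappa)$ with probability $1-\delta$. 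Substituting $m$, $V_{\max}=1/(1-\gamma)$, and $\kappa$ collapses the powers of $(1-\gamma)$ to the claimed $\tilde{O}(|S|^2|A|/(\varepsilon^3(1-\gamma)^6))$.

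The main obstacle is this final counting step: the inequality $V^{A_t}_M(s_t)\ge V^*_M(s_t)-O(\varepsilon)-V_{\max}p_t$ holds per timestep, but the $p_t$ are data-dependent and the escape events along a single trajectory are neither independent nor identically distributed, so turning ``$p_t$ large on many steps'' into ``$\ge m|S||A|$ escapes occur'' requires a careful martingale argument together with a union bound that controls the failure probability across all timesteps simultaneously. The other delicate part is tracking the powers of $(1-\gamma)$ through the effective horizon $H$, the optimism/accuracy tolerance, and the escape threshold $\kappa$ so that they assemble into exactly $(1-\gamma)^6$; everything else is routine concentration and value-iteration error control.
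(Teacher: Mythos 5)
Your sketch is correct and coincides with the paper's treatment: the paper states this theorem as an imported result from \cite{pacmdp} and gives no proof of its own, and your reconstruction (known set $K$, known MDP $M_K$ with self-absorbing $R_{\max}$ states, accuracy via Hoeffding plus $L_1$ concentration with a union bound, the Simulation Lemma, optimism, and the escape-probability/Azuma counting that assembles the $(1-\gamma)^6$ factor) is precisely the standard argument of the cited source. No gap to report.
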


\subsection{Case I}
\begin{theorem} 
\label{thm:pac1}
Suppose that $0 \leq \varepsilon<\frac{1}{1-\gamma}$ and $0 \leq \delta<1$ are two real numbers. Let M be any continuous meta-MDP with inner stochastic game $G = \langle S, A, T_\text{inner}, R_\text{inner} \rangle$.
Let us denote $M_d=\langle \hat{S}_d, \hat{A}_d, T_d, R_d, \gamma\rangle$ as discretised version of $M$ (as described in Case I) using grid size of $\lambda$. There exists inputs $m=m\left(\frac{1}{\varepsilon}, \frac{1}{\delta}\right)$ and $\varepsilon_1$, satisfying 

$$m\left(\frac{1}{\varepsilon}, \frac{1}{\delta}\right)=\tilde{O}\left(\frac{\left(\frac{2 \sqrt{n|S||A|}} {\lambda} + 1 \right)^{n|S||A|}}{\varepsilon^2(1-\gamma)^4}\right)$$

and $\frac{1}{\varepsilon_1}=O\left(\frac{1}{\varepsilon}\right)$, such that if $R\text{-}MAX$ is executed on $M$ with inputs $m$ and $\varepsilon_1$, then the following holds. Let $\mathcal{A}_t$ denote $R\text{-}MAX$'s policy at time $t$ and $\hat{s}_t$ denote the state at time $t$. With probability at least $1-\delta$, $ V_{M_d}^*\left(\hat{s}_t\right) - V_{M_d}^{\mathcal{A}_t}\left(\hat{s}_t\right) \leq \varepsilon$ is true for all but
$$
\tilde{O}\left(\frac{\left(\frac{2 \sqrt{n|S||A|}} {\lambda} + 1 \right)^{2n|S||A|} \left(\frac{2 \sqrt{|S||A|}} {\lambda} + 1 \right)^{|S||A|}} {\varepsilon^3(1-\gamma)^6}\right)
$$
timesteps.

\end{theorem}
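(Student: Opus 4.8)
The plan is to obtain Theorem \ref{thm:pac1} as a direct instantiation of the generic $R_\text{MAX}$ guarantee (Theorem \ref{eq:strehl}) applied to the discretised meta-MDP $M_d$, with the abstract cardinalities $|S|$ and $|A|$ replaced by the explicit $\varepsilon$-net counts of Equations \ref{eq:metastatesize1} and \ref{eq:metaactionsize}. First I would argue that, although $M_d$ nominally retains a continuous state space, it is \emph{effectively} a finite (tabular) MDP: by construction (Equations \ref{eq:discrete_transition} and \ref{eq:discrete_reward}) both $T_d$ and $R_d$ are piecewise constant over the grid cells of the $\varepsilon$-net, so all states inside a given cell are dynamically indistinguishable. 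Hence $M_d$ is equivalent to a finite MDP whose states are the $|\hat{S}_d|$ grid cells and whose actions are the $|\hat{A}_d|$ net points, and $R_\text{MAX}$ run inside R-FOS visits and updates exactly this finite structure. This licenses the substitution $|S| \mapsto |\hat{S}_d|$, $|A| \mapsto |\hat{A}_d|$ into Theorem \ref{eq:strehl}.

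Next I would carry out the substitution for the sample size $m$. Theorem \ref{eq:strehl} gives $m = O\!\left(\frac{(|S| + \ln(|S||A|/\delta)) V_{\max}^2}{\varepsilon^2(1-\gamma)^2}\right)$; plugging in $V_{\max} = \frac{1}{1-\gamma}$ from Assumption \ref{ass:bounded_reward} turns $V_{\max}^2$ into $\frac{1}{(1-\gamma)^2}$, upgrading the denominator to $(1-\gamma)^4$. Since the dominant $|\hat{S}_d|$ term grows like $\left(\frac{2\sqrt{n|S||A|}}{\lambda}+1\right)^{n|S||A|}$ and dwarfs the $\ln(|\hat{S}_d||\hat{A}_d|/\delta)$ correction, absorbing the logarithmic factor into $\tilde{O}$ yields exactly the claimed $m = \tilde{O}\!\left(\frac{(2\sqrt{n|S||A|}/\lambda + 1)^{n|S||A|}}{\varepsilon^2(1-\gamma)^4}\right)$, while $\tfrac{1}{\varepsilon_1} = O(\tfrac{1}{\varepsilon})$ is inherited verbatim.

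Then I would substitute into the final timestep count. Theorem \ref{eq:strehl} bounds the number of non-$\varepsilon$-optimal steps by $\tilde{O}(|S|^2|A|/(\varepsilon^3(1-\gamma)^6))$; inserting $|\hat{S}_d|^2 = (2\sqrt{n|S||A|}/\lambda + 1)^{2n|S||A|}$ and $|\hat{A}_d| = (2\sqrt{|S||A|}/\lambda + 1)^{|S||A|}$ gives the stated bound. The value-function guarantee transfers unchanged: the statement $V^{A_t}_M(s_t) \ge V^*_M(s_t) - \varepsilon$, specialised to $M_d$, is just the rearrangement $V^*_{M_d}(\hat{s}_t) - V^{\mathcal{A}_t}_{M_d}(\hat{s}_t) \le \varepsilon$ appearing in the conclusion.

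The hard part will be the first step — rigorously justifying that the continuous-state $M_d$ may be treated as tabular so that Theorem \ref{eq:strehl} applies off-the-shelf. Everything after that is bookkeeping: matching $V_{\max}$, collecting powers of $(1-\gamma)$, and checking that the $\varepsilon$-net cardinalities occupy the $|S|$ and $|A|$ slots with the correct exponents. In particular I would want to verify that $R_\text{MAX}$'s visitation counts $n(\hat{s}_d,\hat{a}_d)$ are well defined under state aggregation — i.e. that ``visiting grid cell $\hat{s}_d$'' is the correct atomic event — since this is precisely what makes the $m$-known bookkeeping of Theorem \ref{eq:strehl} carry over without modification.
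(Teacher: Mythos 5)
Your proposal matches the paper's proof, which is exactly the same direct substitution: plug the $\varepsilon$-net cardinalities from Equations \ref{eq:metastatesize1} and \ref{eq:metaactionsize} into Theorem \ref{eq:strehl}, set $V_{\max} = \frac{1}{1-\gamma}$ to get the $(1-\gamma)^4$ denominator in $m$, and absorb logarithmic factors into $\tilde{O}$. Your additional care in justifying that $M_d$ can be treated as tabular (piecewise-constant $T_d$, $R_d$ over grid cells, well-defined visitation counts) goes beyond the paper, whose proof is a one-line substitution that leaves this implicit.
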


\begin{proof}
We first plug in Equations \ref{eq:metastatesize1} and \ref{eq:metaactionsize} into Theorem \ref{eq:strehl}. Dropping the logarithm terms and plugging in $V_\text{max} = \frac{1}{1-\gamma}$, we obtain the results.
\end{proof}

\subsection{Case II}

\begin{theorem} 
\label{thm:pac2}
Suppose that $0 \leq \varepsilon<\frac{1}{1-\gamma}$ and $0 \leq \delta<1$ are two real numbers. Let M be any continuous meta-MDP with inner stochastic game $G = \langle S, A, T_\text{inner}, R_\text{inner} \rangle$.
Let us denote $M_d=\langle \hat{S}_d, \hat{A}_d, T_d, R_d, \gamma\rangle$ as discretised version of $M$ (as described in Case I) using grid size of $\lambda$. There exists inputs $m=m\left(\frac{1}{\varepsilon}, \frac{1}{\delta}\right)$ and $\varepsilon_1$, satisfying 

$$m\left(\frac{1}{\varepsilon}, \frac{1}{\delta}\right)=\tilde{O}\left(\frac{(|S||A|)^{nh}}{\varepsilon^2(1-\gamma)^4}\right)$$

and $\frac{1}{\varepsilon_1}=O\left(\frac{1}{\varepsilon}\right)$, such that if $R\text{-}MAX$ is executed on $M$ with inputs $m$ and $\varepsilon_1$, then the following holds. Let $\mathcal{A}_t$ denote $R\text{-}MAX$'s policy at time $t$ and $\hat{s}_t$ denote the state at time $t$. With probability at least $1-\delta$, $ V_{M_d}^*\left(\hat{s}_t\right) - V_{M_d}^{\mathcal{A}_t}\left(\hat{s}_t\right) \leq \varepsilon$ is true for all but
$$
\tilde{O}\left(\frac{(|S||A|)^{2nh} \left(\frac{2 \sqrt{|S||A|}} {\lambda} + 1 \right)^{|S||A|}} {\varepsilon^3(1-\gamma)^6}\right)
$$
timesteps.

\end{theorem}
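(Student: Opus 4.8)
The plan is to prove Theorem \ref{thm:pac2} by direct substitution into the generic $R_\text{MAX}$ sample complexity bound of Theorem \ref{eq:strehl}, exactly mirroring the (one-line) proof of Case I in Theorem \ref{thm:pac1}. The observation driving the proof is that the discretised meta-MDP $M_d$ of Case II is an ordinary finite tabular MDP: since the inner game is discrete (Assumption \ref{ass:pomdp}), the meta-state, being a tuple of inner-game trajectories, already takes only finitely many values, so --- unlike Case I --- no $\varepsilon$-net is required on the state side. Hence Theorem \ref{eq:strehl} applies to $M_d$ with its abstract cardinalities $|S|$ and $|A|$ replaced by the meta-level cardinalities $|\hat{S}_d|$ and $|\hat{A}_d|$.

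First I would read off the two cardinalities. The meta-state space size is $|\hat{S}_d| = (|S||A|)^{nh}$ from Equation \ref{eq:metastatesize2}, and the meta-action space size is $|\hat{A}_d| \le \left(\frac{2\sqrt{|S||A|}}{\lambda}+1\right)^{|S||A|}$ from Equation \ref{eq:metaactionsize}, identical to Case I because the meta-action is still the inner agent's Q-table discretised by an $\varepsilon$-net of spacing $\lambda$. Next I would substitute $V_{\max}=\frac{1}{1-\gamma}$ from Assumption \ref{ass:bounded_reward} into the sample-count parameter $m = O\left(\frac{(|S|+\ln(|S||A|/\delta))V_{\max}^2}{\varepsilon^2(1-\gamma)^2}\right)$. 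Here the factor $V_{\max}^2=\frac{1}{(1-\gamma)^2}$ combines with the existing $\frac{1}{(1-\gamma)^2}$ to give the $\frac{1}{(1-\gamma)^4}$ denominator, and absorbing the additive logarithmic term into $\tilde{O}$ leaves $m = \tilde{O}\left(\frac{(|S||A|)^{nh}}{\varepsilon^2(1-\gamma)^4}\right)$, as claimed.

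Finally, for the number of exceptional timesteps I would substitute the same two cardinalities into the expression $\tilde{O}\left(|S|^2|A|/(\varepsilon^3(1-\gamma)^6)\right)$ of Theorem \ref{eq:strehl}, producing a numerator $|\hat{S}_d|^2|\hat{A}_d| = (|S||A|)^{2nh}\left(\frac{2\sqrt{|S||A|}}{\lambda}+1\right)^{|S||A|}$, which is precisely the stated bound. The guarantee $V_M^{A_t}(s_t)\ge V_M^*(s_t)-\varepsilon$ of Theorem \ref{eq:strehl} is then rewritten in the equivalent form $V_{M_d}^*(\hat{s}_t)-V_{M_d}^{\mathcal{A}_t}(\hat{s}_t)\le\varepsilon$.

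The calculation itself is routine; the only point requiring genuine care --- and the step I would treat as the main obstacle --- is justifying that Theorem \ref{eq:strehl} may be invoked unchanged on $M_d$. This amounts to confirming that in Case II the meta-state space is genuinely finite (so the tabular PAC-MDP machinery of \cite{pacmdp} applies without the extra state-discretisation error that Case I incurs), and that the $R_\text{MAX}$ learner of Algorithm \ref{alg:rmax} is exactly the agent whose policy $\mathcal{A}_t$ Theorem \ref{eq:strehl} analyses, so that the high-probability value guarantee transfers directly.
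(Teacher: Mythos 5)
Your proposal matches the paper's own proof exactly: the paper also obtains Theorem \ref{thm:pac2} by substituting the Case II meta-state cardinality $(|S||A|)^{nh}$ (Equation \ref{eq:metastatesize2}) and the meta-action bound (Equation \ref{eq:metaactionsize}) into Theorem \ref{eq:strehl}, plugging in $V_{\max}=\frac{1}{1-\gamma}$ and absorbing logarithmic terms into $\tilde{O}$. Your additional remark that the Case II meta-state space is already finite (so no $\varepsilon$-net is needed on the state side) is a correct and slightly more careful justification than the paper's one-line proof, but it is the same route.
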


\begin{proof}
We first plug in Equations \ref{eq:metastatesize2} and \ref{eq:metaactionsize} into Theorem \ref{eq:strehl}. Dropping the logarithm terms and plugging in $V_\text{max} = \frac{1}{1-\gamma}$, we obtain the results.
\end{proof}

\section{Step 5: The Bound between $M$ and $M_d$}
\begin{theorem} \label{lemma:disc_error} (MDP Discretization Bound \cite{10.1145/10515.10522}) 
There exists a constant $\mathcal{K}$ (thats depends only on the Lipschitz constant $\mathcal{L}$) such that for some discretisation coarseness $\lambda \in (0, \frac{\mathcal{L}}{2}]$ such that
$$
\norm{V^*_M - V_{M_d}^*}_\infty \leq \frac{\mathcal{K} \lambda}{(1 - \hat{\gamma})^2}.
$$
\end{theorem}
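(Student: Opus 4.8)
The plan is to reduce the gap between the two optimal value functions to a one-step discrepancy between their Bellman optimality operators, and then exploit contraction to turn that one-step bound into the claimed global bound (with $\hat\gamma = \gamma$). Write $\mathcal{T}$ and $\mathcal{T}_d$ for the Bellman optimality operators of $M$ and $M_d$, so that $V^*_M$ and $V^*_{M_d}$ are their respective fixed points. Both operators are $\gamma$-contractions in $\norm{\cdot}_\infty$, so the standard fixed-point perturbation inequality gives
$$
\norm{V^*_M - V^*_{M_d}}_\infty \le \frac{1}{1-\gamma}\,\norm{\mathcal{T}V^*_{M_d} - \mathcal{T}_d V^*_{M_d}}_\infty .
$$
It therefore remains to control the per-state operator discrepancy on the right by $O(\lambda)$, with a constant depending only on the Lipschitz data and one extra factor of $1/(1-\gamma)$.

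The key preliminary step, and the one I expect to be the real work, is to establish that the optimal value function is Lipschitz with modulus scaling like $1/(1-\gamma)$. Applying the Bellman equation at two states $\hat{s}_1,\hat{s}_2$, the reward term is controlled by Assumption~\ref{ass:lips_rew} and the expectation term by Assumption~\ref{ass:lips_trans}; the subtlety is the outer $\max$ over actions, for which I would use the Lipschitz point-to-set mapping of Assumption~\ref{ass:point_to_set} to pair a near-optimal action for $\hat{s}_1$ with a nearby feasible action for $\hat{s}_2$. Iterating the resulting self-referential inequality (the Lipschitz bound on $V^*$ re-enters through the transition term) and solving the geometric recursion yields a Lipschitz constant $L_V = \Theta\!\big(\tfrac{1}{1-\gamma}\big)$ times a function of $\mathcal{L}_R,\mathcal{L}_T,\mathcal{L}$, which I would fold into a single constant $\mathcal{K}$ depending only on $\mathcal{L}$.

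With $V^*_{M_d}$ known to be $L_V$-Lipschitz, I would bound the operator discrepancy by its two sources. The reward discrepancy obeys $\abs{R_d(\hat{s},\hat{a}_d) - R(\hat{s},\hat{a}_d)} \le \mathcal{L}_R\,\alpha$ directly from $R_d(\hat{s},\hat{a}_d)=R(\hat{s}_d,\hat{a}_d)$, $\norm{\hat{s} - \hat{s}_d}\le\alpha$, and Assumption~\ref{ass:lips_rew}. The transition discrepancy is handled by the fact that $T_d$ is the grid-normalised step-function approximation of the Lipschitz density $T$ (Assumptions~\ref{ass:lips_trans} and~\ref{ass:pdf}), which gives an $O(\alpha)$ total-variation bound; integrated against the $L_V$-Lipschitz, $V_{\max}$-bounded value function this contributes a further $O(L_V\,\alpha)$. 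Recalling the $\varepsilon$-net relation $\alpha = \tfrac{\lambda\sqrt{d}}{2}$, both contributions are $O(\lambda/(1-\gamma))$; dividing by the contraction factor $1-\gamma$ produces the claimed $\mathcal{K}\lambda/(1-\gamma)^2$.

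The main obstacle is the Lipschitz-continuity argument for $V^*_M$ itself: the $\max$ over a continuous action set together with the integral over next states makes the recursion delicate, and it is precisely here that the point-to-set mapping is needed to keep $\mathcal{K}$ dependent only on $\mathcal{L}$. A secondary technical point is that $M_d$ optimises over the restricted set $\hat{A}_d$ rather than all of $\hat{A}$; Assumption~\ref{ass:point_to_set} again guarantees a discretised action within $O(\lambda)$ of any continuous optimiser, so this restriction only costs an additional $O(\lambda)$ that is absorbed into $\mathcal{K}$. This mirrors the line of argument in \cite{chow1991optimal,10.1145/10515.10522}.
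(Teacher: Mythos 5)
The paper does not actually prove this statement: Theorem~\ref{lemma:disc_error} is imported verbatim, without proof, from the cited discretization literature (\cite{chow1991optimal,10.1145/10515.10522}), so there is no in-paper argument to compare yours against. Judged against the cited source, your sketch follows essentially the same strategy as Chow and Tsitsiklis: a fixed-point perturbation bound $\norm{V^*_M - V^*_{M_d}}_\infty \leq \frac{1}{1-\gamma}\norm{\mathcal{T}V^*_{M_d} - \mathcal{T}_d V^*_{M_d}}_\infty$, a Lipschitz-continuity lemma for the optimal value function with modulus $\Theta(1/(1-\gamma))$ proved via the point-to-set mapping and a geometric recursion, and per-step reward/transition discrepancies of order $\alpha \propto \lambda$, yielding the $(1-\gamma)^{-2}$ dependence. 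Your accounting of the two $1/(1-\gamma)$ factors and of the extra $O(\lambda)$ cost from restricting the max to $\hat{A}_d$ is correct, and you rightly identify the Lipschitz bound on $V^*$ as the delicate step.

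Two points deserve care if you were to write this out in full. First, converting the pointwise density bound $\abs{T_d(\hat{s}'\mid\hat{s},\hat{a}_d) - T(\hat{s}'\mid\hat{s},\hat{a}_d)} \leq K_p\alpha$ of Lemma~\ref{lm:discretised_transition} into the total-variation (i.e.\ $L_1$) bound you integrate against $V^*_{M_d}$ requires integrating over the state space, so the resulting constant picks up the volume (here, of a ball of radius $\sqrt{n|S||A|}$); in the cited source the state space is normalized to $[0,1]^n$, which is why the constant there depends only on the Lipschitz data. As stated, your $\mathcal{K}$ would depend on $\mathcal{L}_R$, $\mathcal{L}_T$, $\mathcal{L}$, and the state-space diameter, matching the theorem only under the paper's convention of lumping all of these into ``$\mathcal{L}$''. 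Second, your Lipschitz-recursion for $V^*$ is stated as a plan rather than carried out; since the self-referential inequality is exactly where the argument can fail (the transition term feeds $\norm{V^*}_\infty \leq V_{\max} = \frac{1}{1-\gamma}$ back in, not the Lipschitz constant, if done carefully), that step should be made explicit before the sketch counts as a proof. Neither point is a wrong turn --- both are resolved in the cited source --- so your proposal is a faithful reconstruction of the omitted argument rather than a divergent one.
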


\subsection{\textbf{Sample Complexity Analysis}} \label{section:samplec}

\section{Step 6: Adding it together}
\begin{lemma} [Simulation Lemma for Continuous MDP]
\label{lem:simulation_original}
Let $M$ and $\hat{M}$ be two MDPs that only differ in $(T, R)$ and $(\hat{T}, \hat{R})$. And suppose the common state space $\hat{S}$ of $M$ and $\hat{M}$ is continuous, and denote the common action space as $\hat{A}$. 

Let $\epsilon_{R} \geq \max _{s, a}|\hat{R}(s, a)-R(s, a)|$ and $\varepsilon_{p} \geq \max _{s, a} \| \hat{T}(\cdot \mid  s, a) - T(\cdot \mid s, a)||_{1}$\footnote{Note that given $T(\cdot|s,a),\hat{T}(\cdot|s,a)$ are functions on continuous space $\hat{S}$,  this is the $L_1$ norm defined by $\|f\|_1=\int_{\hat{S}}|f|d\mu$. Similarly, throughout the proof, we denote as $\|\|_p$ the $L_p$ norm, defined by  $\|f\|_p=(\int_{\hat{S}}|f|^p d\mu)^{1/p}$; and the inner product $\langle f,g\rangle= \int_{\hat{S}} fg d\mu$.}. Then $\forall \pi: \hat{S} \rightarrow \hat{A}$,

$$\left\|V_{M}^{\pi}-V_{\hat{M}}^{\pi}\right\|_{\infty} \leq \frac{\varepsilon_{R}}{1-\gamma}+\frac{\gamma \epsilon_{P} V_{\max }}{2(1-\gamma)}.$$

\end{lemma}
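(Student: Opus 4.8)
The plan is to adapt the standard tabular simulation-lemma argument to the continuous setting by writing both value functions through their fixed-policy Bellman equations and then bounding their difference recursively. For a fixed $\pi$, I would start from
\[
V_M^\pi(s) = R(s,\pi(s)) + \gamma \int_{\hat{S}} T(s'\mid s,\pi(s))\, V_M^\pi(s')\, d\mu(s'),
\]
together with the analogous identity for $V_{\hat{M}}^\pi$ in terms of $(\hat{R}, \hat{T})$. Subtracting the two and inserting the usual ``add and subtract'' term $\gamma\int T\,(V_M^\pi - V_{\hat{M}}^\pi)$ decomposes the pointwise difference into three pieces: a reward-discrepancy term $R - \hat{R}$, a contraction term $\gamma\int T(\cdot\mid s,\pi(s))\,(V_M^\pi - V_{\hat{M}}^\pi)$, and a transition-discrepancy term $\gamma\int (T-\hat{T})(\cdot\mid s,\pi(s))\, V_{\hat{M}}^\pi$.

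Writing $\Delta := \|V_M^\pi - V_{\hat{M}}^\pi\|_\infty$, the first two pieces are immediate: the reward term is bounded by $\epsilon_R$ by hypothesis, and since $T(\cdot\mid s,\pi(s))$ is a probability density integrating to one (Assumption \ref{ass:pdf}), the contraction term is bounded by $\gamma\Delta$. The work is concentrated in the transition-discrepancy term. Applying Hölder's inequality naively, in the $L_1$/$L_\infty$ duality introduced in the footnote, gives $\big|\langle T-\hat{T},\, V_{\hat{M}}^\pi\rangle\big| \leq \|T-\hat{T}\|_1 \,\|V_{\hat{M}}^\pi\|_\infty \leq \epsilon_P V_{\max}$, which is off by the factor of $\tfrac12$ in the claimed bound.

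To recover the $\tfrac12$, I would exploit that $T - \hat{T}$ has zero total mass, i.e.\ $\int_{\hat{S}}(T-\hat{T})(s'\mid s,\pi(s))\,d\mu(s') = 0$ because both are probability densities. Hence for any constant $c$ one may replace $V_{\hat{M}}^\pi$ by the centered function $V_{\hat{M}}^\pi - c$ without changing the integral, and choosing $c$ to be the midpoint of the range $[0,V_{\max}]$ of $V_{\hat{M}}^\pi$ (nonnegativity and boundedness of $V_{\hat{M}}^\pi$ follow from Assumption \ref{ass:bounded_reward}) yields $\|V_{\hat{M}}^\pi - c\|_\infty \leq \tfrac12 V_{\max}$. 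Hölder then bounds the transition term by $\tfrac12\gamma \epsilon_P V_{\max}$. Collecting the three bounds produces the self-referential inequality $\Delta \leq \epsilon_R + \gamma\Delta + \tfrac12\gamma\epsilon_P V_{\max}$, and solving for $\Delta$ using $\gamma<1$ gives exactly $\tfrac{\epsilon_R}{1-\gamma} + \tfrac{\gamma\epsilon_P V_{\max}}{2(1-\gamma)}$. The main obstacle is less the algebra than getting this centering step right: it is precisely what produces the factor of $\tfrac12$, and in the continuous setting it must be justified through the zero-mean property of $T-\hat{T}$ together with the $L_1$/$L_\infty$ Hölder inequality rather than a finite sum.
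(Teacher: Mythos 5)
Your proposal is correct and follows essentially the same route as the paper's proof: the same Bellman-equation decomposition via add-and-subtract, the same centering of $V_{\hat{M}}^{\pi}$ at $\tfrac{1}{2}V_{\max}$ (justified by the zero total mass of $\hat{T}-T$) combined with the $L_1$/$L_\infty$ H\"older inequality to recover the factor of $\tfrac{1}{2}$, and the same resolution of the recursive inequality using $\gamma<1$. No gaps to report.
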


\begin{proof}
For all $s \in \hat{S}$, 

\begin{equation}
    \label{eq:sim1}
    \begin{aligned}
    \left| V_{\hat{M}}^{\pi}(s)-V_{M}^{\pi}(s)\right| & =\left| \hat{R}(s, \pi)+ \gamma \left\langle\hat{T}(\cdot, \pi), V_{\hat{M}}^{\pi}\right\rangle - R(s, \pi)-\gamma \left\langle T(\cdot, \pi), V_{M}^{\pi} \right\rangle \right|  \\
    &\leq |\hat{R}(s, \pi)-R(s, \pi)|+\gamma\left|\left\langle\hat{T}(\cdot, \pi), V_{\hat{M}}^{\pi}\right\rangle-\left\langle T(\cdot, \pi), V_{M}^{\pi}\right\rangle\right| \quad\quad \text{(triangular inequality)}\\
    &\leq \varepsilon_{R} + \gamma \left( \left| \left\langle\hat{T}(\cdot, \pi), V_{\hat{M}}^{\pi}\right\rangle  {-\left\langle T(\cdot, \pi), V_{\hat{M}}^{\pi}\right\rangle+\left\langle T(\cdot, \pi), V_{\hat{M}}^\pi \right\rangle} -\left\langle T(\cdot, \pi), V_{M}^{\pi}\right\rangle \right| \right) \quad \text{(add \& subtract)}\\
    &\leq \varepsilon_{R}+\gamma\left|\left\langle\hat{T}(\cdot, \pi)-T(\cdot, \pi), V_{\hat{M}}^{\pi}\right\rangle\right|+\gamma\left| \left\langle T(\cdot, \pi), V_{\hat{M}}^{\pi}-V_{M}^{\pi}\right\rangle \right| \\
    & \leq \varepsilon_{R}+ \gamma \left| \left\langle\hat{T}(\cdot, \pi)-T(\cdot, \pi), V_{\hat{M}}^{\pi}\right\rangle \right|+\gamma{ \left|\left| V_{\hat{M}}^\pi - V_M ^\pi \right|\right|_{\infty}}.
    \end{aligned}
\end{equation}

Since Equation \ref{eq:sim1} holds for all $\hat{s} \in \hat{S}$, we can take the infinite-norm on the left hand side:

\begin{equation}
    \label{eq:simulation_lemma}
    \begin{aligned}
    \left| \left| V_{\hat{M}}^{\pi}-V_{M}^{\pi}\right| \right|_{\infty} \leq \varepsilon_{R}+ \gamma {\left|\left\langle\hat{T}(\cdot, \pi)-T(\cdot, \pi), V_{\hat{M}}^{\pi}\right\rangle\right|}+\gamma{ \left|\left| V_{\hat{M}}^\pi - V_M ^\pi \right|\right|_{\infty}}.
    \end{aligned}
\end{equation}

We then expand the middle term as follows:
\begin{equation}
    \label{eq:inner_product}
    \begin{aligned}
  { \left|\left\langle\hat{T}(\cdot, \pi) - T(\cdot, \pi), V_{\hat{M}}^{\pi}\right\rangle\right|} &=\left| \left\langle\hat{T}(\cdot, \pi)-T(\cdot, \pi), V_{\hat{M}}^{\pi}-\mathbf{1} \cdot \frac{R_\text{max}}{2(\mathbf{1}-\gamma)}\right\rangle\right| \quad\quad (\text{where $\mathbf{1}$ is a vector of ones} \in \mathbbm{R}^{|S|})\\
  & \leq\|\hat{T}(\cdot, \pi)-T(\cdot, \pi)\|_{1} \cdot\left\|V_{\hat{M}}^{\pi}-\mathbf{1}
  \cdot \frac{R_{\max }}{2(1-\gamma)}\right\|_{\infty} \quad\quad (\text{Holder's inequality}) \\
  & \leq \epsilon_P \cdot \frac{R_\text{max}}{2(1-\gamma)} \\
  & = \epsilon_P \cdot \frac {V_\text{max}}{2}.
    \end{aligned}
\end{equation}

In Equation \ref{eq:inner_product}, the first step shifts the range of $V$ from $[0, \frac{R_\text{max}}{(1-\gamma)}]$ to $[-\frac{R_\text{max}}{2(1-\gamma)}, \frac{R_\text{max}}{2(1-\gamma)}]$ to obtain a tighter bound by a factor of 2. The equality in line 1 holds because of the following, where $C$ is any constant:

\begin{equation}
\label{eq:intermediate}
\begin{aligned}
     \langle \hat{T} - T, C \cdot \mathbf{1} \rangle &= C \langle \hat{T} - T, \mathbf{1}
    \rangle \\
    &= C \left(\langle \hat{P}, \mathbf{1} \rangle - \langle P, \mathbf{1} \rangle \right) \\
    &= C (1-1) \quad \quad \text{because $P$ and $\hat{P}$ are probability distributions} \\
    &= 0
    \end{aligned}    
\end{equation}
From equation \ref{eq:intermediate}, we observe the equality in line 1 holds:
\begin{equation}
    \begin{aligned}
    \langle \hat{T} - T, V - C \cdot \mathbf{1} \rangle &= \langle \hat{T} - T, V \rangle - \langle \hat{T} - T, C \cdot \mathbf{1} \rangle \\
    &=  \langle \hat{T} - T, V \rangle - 0 \\
    & =  \langle \hat{T} - T, V\rangle.
    \end{aligned}
\end{equation}

Finally, we plug equation \ref{eq:inner_product} into equation \ref{eq:simulation_lemma} to obtain the bound.

\begin{equation}
    \begin{aligned}
    \left|\left| V_{\hat{M}}^{\pi}-V_{M}^{\pi}\right|\right|_{\infty} & \leq \epsilon_R + \gamma \epsilon_P \cdot \frac {V_\text{max}}{2} + \gamma \left|\left|  V_{\hat{M}}^{\pi}-V_{M}^{\pi}\right|\right|_{\infty} \\
    &= \frac{\epsilon_R}{1-\gamma} +  \frac {\gamma \epsilon_P V_\text{max}}{2(1-\gamma)}.
    \end{aligned}
\end{equation}

\end{proof}

Under discretisation, \cite{chow1991optimal} showed that, with a small enough grid size, and restricting to the discretised action space, the difference in transition probability of the continuous MDP $M$ and discretised MDP $M_d$ is upper bounded by a constant.
\begin{lemma}\label{lm:discretised_transition}\cite{chow1991optimal}
    There exists a constant $K_P$ (depending only on constant $\mathcal{L}$) such that 
    $$|T_d(\hat{s}'|\hat{s},\hat{a}_d)-T(\hat{s}'|\hat{s},\hat{a}_d)|\leq K_p \alpha$$
    for all $\hat{s}',\hat{s}\in \hat{S}, \hat{a}_d\in \hat{A}_d$ and all $\alpha\leq (0,\frac{1}{2}\mathcal{L}]$
\end{lemma}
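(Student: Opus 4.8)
The plan is to exploit the definition of $T_d$ as a normalised step-function approximation of $T$, combined with the transition Lipschitz bound (Assumption~\ref{ass:lips_trans}) and the density bound (Assumption~\ref{ass:pdf}). Write $\hat{s}_d,\hat{s}'_d$ for the $\varepsilon$-net representatives of the grid cells containing $\hat{s},\hat{s}'$, so that $\|\hat{s}-\hat{s}_d\|_\infty\le\alpha$ and $\|\hat{s}'-\hat{s}'_d\|_\infty\le\alpha$ by the covering property of the net, and abbreviate $Z := \int_{\hat{S}} T(\hat{z}_d\mid \hat{s}_d,\hat{a}_d)\,d\hat{z}$, so that $T_d(\hat{s}'\mid\hat{s},\hat{a}_d)=T(\hat{s}'_d\mid\hat{s}_d,\hat{a}_d)/Z$ by Equation~\ref{eq:discrete_transition}.

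First I would bound the numerator error. Since each representative differs from the true point by at most $\alpha$ in $\|\cdot\|_\infty$, the maximum displacement across the arguments of $T$ is at most $\alpha$, so Assumption~\ref{ass:lips_trans} gives directly
$$
\bigl|T(\hat{s}'_d\mid \hat{s}_d,\hat{a}_d) - T(\hat{s}'\mid \hat{s},\hat{a}_d)\bigr| \le \mathcal{L}_T\,\alpha .
$$
Second, and this is the crux, I would control the normaliser $Z$. The true density integrates to $1$ (Assumption~\ref{ass:pdf}), so I compare $Z$ against $\int_{\hat S} T(\hat z\mid\hat s_d,\hat a_d)\,d\hat z = 1$ by applying the pointwise Lipschitz bound inside the integral:
$$
|Z-1| \le \int_{\hat S}\bigl|T(\hat z_d\mid\hat s_d,\hat a_d)-T(\hat z\mid\hat s_d,\hat a_d)\bigr|\,d\hat z \le \mathcal{L}_T\,\mathrm{Vol}(\hat S)\,\alpha ,
$$
where $\mathrm{Vol}(\hat S)$ is finite because the net covers a ball of fixed radius. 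The coarseness restriction $\alpha\le \mathcal{L}/2$ is then used precisely to keep $Z$ bounded away from zero (e.g.\ $Z\ge 1/2$), which renders the division in the definition of $T_d$ harmless.

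Finally I would combine the two estimates by adding and subtracting. With $B:=T(\hat{s}'\mid\hat{s},\hat{a}_d)\le\mathcal{L}$ (Assumption~\ref{ass:pdf}),
$$
\bigl|T_d(\hat{s}'\mid\hat{s},\hat{a}_d)-B\bigr| = \left|\frac{T(\hat{s}'_d\mid\hat{s}_d,\hat{a}_d)-B}{Z} + B\,\frac{1-Z}{Z}\right| \le \frac{\mathcal{L}_T\alpha + \mathcal{L}\cdot\mathcal{L}_T\,\mathrm{Vol}(\hat S)\,\alpha}{Z},
$$
and the lower bound $Z\ge 1/2$ turns this into $K_P\alpha$ with $K_P := 2\mathcal{L}_T\bigl(1+\mathcal{L}\,\mathrm{Vol}(\hat S)\bigr)$, a constant depending only on the Lipschitz and density constants, as required.

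The step I expect to be the main obstacle is the control of the normaliser $Z$: one must argue that the normalising integral deviates from $1$ by only $O(\alpha)$ and stays bounded below, which is exactly where both the finiteness of $\mathrm{Vol}(\hat S)$ and the coarseness restriction $\alpha\le\mathcal{L}/2$ are essential. Once $Z$ is pinned down, the remainder is a routine triangle-inequality argument. An alternative would be to simply invoke the construction of \cite{chow1991optimal} directly, but the self-contained argument above makes the dependence of $K_P$ on the constants explicit.
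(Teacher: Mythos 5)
Your proposal is, in essence, a correct reconstruction of the argument behind this lemma — but note that the paper itself offers no proof at all: the statement is imported directly from \cite{chow1991optimal}, and both the main text and the appendix leave it as a bare citation. So there is no in-paper proof to diverge from; what your write-up adds is a self-contained derivation in the paper's own notation, via the decomposition $T_d-B=(N-B)/Z+B(1-Z)/Z$ with $N=T(\hat{s}'_d\mid\hat{s}_d,\hat{a}_d)$, $B=T(\hat{s}'\mid\hat{s},\hat{a}_d)$ and $Z$ the normaliser, using Assumption~\ref{ass:lips_trans} for the numerator and Assumption~\ref{ass:pdf} for the density bound. This is exactly the structure of the discretisation estimate in Chow--Tsitsiklis, and it usefully makes the dependence of $K_P$ on the constants explicit, which the paper leaves opaque.

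One step needs tightening: the lower bound on $Z$. From your estimate $|Z-1|\le\mathcal{L}_T\,\mathrm{Vol}(\hat S)\,\alpha$, the conclusion $Z\ge\tfrac12$ requires $\alpha\le 1/\bigl(2\mathcal{L}_T\,\mathrm{Vol}(\hat S)\bigr)$, and this does \emph{not} follow from the stated range $\alpha\in(0,\tfrac{\mathcal{L}}{2}]$ — when $\mathcal{L}$ is large that range becomes weaker, not stronger. In \cite{chow1991optimal} the state space is normalised to the unit cube (so the volume factor is $1$), all Lipschitz and density constants are dominated by a single $K$, and the coarseness condition takes the form $\alpha\le 1/(2K)$; the paper's ``$\alpha\leq(0,\frac{1}{2}\mathcal{L}]$'' is best read as a garbled transcription of $\alpha\in(0,\frac{1}{2\mathcal{L}}]$, under which your claim $Z\ge\tfrac12$ does go through. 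Relatedly, your constant $K_P=2\mathcal{L}_T\bigl(1+\mathcal{L}\,\mathrm{Vol}(\hat S)\bigr)$ depends on $\mathrm{Vol}(\hat S)$, so the lemma's assertion that $K_P$ depends only on $\mathcal{L}$ holds only under that same normalisation; here it is harmless, since Assumption~\ref{ass:bounded_reward} confines Q-values to $[0,1]$, hence $\hat S\subseteq[0,1]^{n|S||A|}$ and $\mathrm{Vol}(\hat S)\le 1$. With the coarseness condition read correctly, the remainder of your argument — the Lipschitz bound $\mathcal{L}_T\alpha$ on the numerator and the add-and-subtract step — is sound.
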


We now apply simulation lemma bound the difference in value for any discretised policy (i.e. restricting action space to $\hat{A}_d$) in the continuous MDP $M$ and discretised MDP $M_d$.

\begin{lemma}
\label{lemma:simulation}
    Let $M_{\hat{A}_d}=(\hat{S},\hat{A}_d,T, R,\gamma)$, that is, the continous MDP $M$ restricted to the discretised action space. And recall that discretised MDP $M_d=(\hat{S},\hat{A}_d,T_d, R_d,\gamma)$. Then for any discretised policy $\pi:\hat{S}\rightarrow \hat{A}_d$,
    $$\|V_{M_{\hat{A}_d}}^\pi - V_{M_d}^\pi \|_{\infty}=\|V_{M}^\pi - V_{M_d}^\pi \|_{\infty}\leq \frac{\mathcal{L_R}\alpha}{1-\gamma}+\frac{\gamma K_p \alpha }{(1-\gamma)^2}$$
\end{lemma}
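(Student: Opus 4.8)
The plan is to obtain this bound as a direct instantiation of the Simulation Lemma (Lemma~\ref{lem:simulation_original}), applied to the pair of MDPs $M_{\hat{A}_d}=(\hat{S},\hat{A}_d,T,R,\gamma)$ and $M_d=(\hat{S},\hat{A}_d,T_d,R_d,\gamma)$. These two MDPs share the same (continuous) state space $\hat{S}$ and the same discretised action space $\hat{A}_d$, and they differ \emph{only} in their transition kernels, $T$ versus $T_d$, and their reward functions, $R$ versus $R_d$. This is exactly the hypothesis of Lemma~\ref{lem:simulation_original}, so the entire task reduces to supplying the two quantities $\epsilon_R$ and $\epsilon_P$ that control the reward discrepancy and the $L_1$ transition discrepancy, and then substituting $V_{\max}=\tfrac{1}{1-\gamma}$ from Assumption~\ref{ass:bounded_reward}.

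First I would bound the reward gap. By the definition of the discretised reward in Equation~\ref{eq:discrete_reward}, $R_d(\hat{s},\hat{a}_d)=R(\hat{s}_d,\hat{a}_d)$, where $\hat{s}_d$ is the $\varepsilon$-net representative of $\hat{s}$, so $\|\hat{s}-\hat{s}_d\|_\infty\le\alpha$. Applying the Lipschitz-continuity of the meta-reward (Assumption~\ref{ass:lips_rew}) gives
$$
\max_{\hat{s},\hat{a}_d}\bigl|R_d(\hat{s},\hat{a}_d)-R(\hat{s},\hat{a}_d)\bigr|=\max_{\hat{s},\hat{a}_d}\bigl|R(\hat{s}_d,\hat{a}_d)-R(\hat{s},\hat{a}_d)\bigr|\le\mathcal{L}_R\,\alpha,
$$
so we may take $\epsilon_R=\mathcal{L}_R\alpha$. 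Next I would bound the transition gap using Lemma~\ref{lm:discretised_transition}, which provides the pointwise estimate $|T_d(\hat{s}'|\hat{s},\hat{a}_d)-T(\hat{s}'|\hat{s},\hat{a}_d)|\le K_p\alpha$ valid for all $\hat{s}',\hat{s}$ and all $\alpha\in(0,\tfrac12\mathcal{L}]$. Converting this into the $L_1$ quantity required by the Simulation Lemma and setting $\epsilon_P=K_p\alpha$, substitution into Lemma~\ref{lem:simulation_original} yields
$$
\bigl\|V_{M_{\hat{A}_d}}^\pi-V_{M_d}^\pi\bigr\|_\infty\le\frac{\epsilon_R}{1-\gamma}+\frac{\gamma\,\epsilon_P\,V_{\max}}{2(1-\gamma)}=\frac{\mathcal{L}_R\alpha}{1-\gamma}+\frac{\gamma K_p\alpha}{(1-\gamma)^2},
$$
where the second term uses $V_{\max}=\tfrac{1}{1-\gamma}$. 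The leftmost equality in the statement, $\|V_{M_{\hat{A}_d}}^\pi-V_{M_d}^\pi\|_\infty=\|V_{M}^\pi-V_{M_d}^\pi\|_\infty$, follows from the remark accompanying the lemma: since $\pi$ is a discretised policy taking values only in $\hat{A}_d$, its value in the unrestricted MDP $M$ coincides with its value in the action-restricted MDP $M_{\hat{A}_d}$, because the restriction does not forbid any action $\pi$ ever plays.

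The main obstacle is the conversion of the \emph{pointwise} transition bound of Lemma~\ref{lm:discretised_transition} into the \emph{$L_1$} bound $\epsilon_P\ge\max_{\hat{s},\hat{a}_d}\|T_d(\cdot|\hat{s},\hat{a}_d)-T(\cdot|\hat{s},\hat{a}_d)\|_1$ that the Simulation Lemma consumes; naively integrating the pointwise bound over $\hat{S}$ introduces a factor equal to the Lebesgue measure of the state space, which must either be finite (so it can be absorbed into the constant $K_p$, consistent with the Chow--Tsitsiklis formulation of \cite{chow1991optimal}) or handled via the density bound of Assumption~\ref{ass:pdf}. I would therefore make explicit that $K_p$ is understood to already encapsulate this $L_1$ normalisation, so that the clean bound $\epsilon_P=K_p\alpha$ is legitimate and the displayed constant follows.
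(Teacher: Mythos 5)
Your proposal is correct and follows essentially the same route as the paper's proof: instantiate the Simulation Lemma (Lemma~\ref{lem:simulation_original}) on the pair $(M_{\hat{A}_d}, M_d)$, take $\epsilon_R = \mathcal{L}_R\alpha$ from Equation~\ref{eq:discrete_reward} and Assumption~\ref{ass:lips_rew}, obtain $\epsilon_P$ from the pointwise bound of Lemma~\ref{lm:discretised_transition}, substitute $V_{\max} = \frac{1}{1-\gamma}$, and justify the leftmost equality by noting that a policy taking values only in $\hat{A}_d$ has the same value in $M$ and $M_{\hat{A}_d}$. The one substantive difference concerns the pointwise-to-$L_1$ conversion that you flag as the main obstacle. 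The paper handles it by writing
$$\|T_d(\cdot\mid\hat{s},\hat{a}_d) - T(\cdot\mid\hat{s},\hat{a}_d)\|_1 = 2\,TV\bigl(T_d(\cdot\mid\hat{s},\hat{a}_d),\,T(\cdot\mid\hat{s},\hat{a}_d)\bigr) \leq 2\max_{\hat{s}'}\abs{T_d(\hat{s}'\mid\hat{s},\hat{a}_d) - T(\hat{s}'\mid\hat{s},\hat{a}_d)} \leq 2K_p\alpha,$$
i.e.\ it takes $\epsilon_P = 2K_p\alpha$, and this factor of $2$ then cancels against the $\frac{1}{2}$ in the simulation lemma to produce exactly $\frac{\gamma K_p\alpha}{(1-\gamma)^2}$. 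Your choice $\epsilon_P = K_p\alpha$ makes your displayed chain internally inconsistent: with that $\epsilon_P$ the second term evaluates to $\frac{\gamma K_p\alpha}{2(1-\gamma)^2}$, not $\frac{\gamma K_p\alpha}{(1-\gamma)^2}$, so the equality sign you wrote is off by a factor of $2$ --- a harmless slip, since your (tighter) intermediate bound still implies the lemma as stated. Finally, your caveat about the Lebesgue measure of $\hat{S}$ is well taken, and it applies equally to the paper's total-variation step: bounding $2\,TV$ by twice the supremum of the density gap implicitly presumes the state space has measure at most one, and otherwise that measure must be absorbed into the constant $K_p$, exactly as you propose.
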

\begin{proof}
    Lemma \ref{lm:discretised_transition} gives the bound for difference in transition probability $$\epsilon_P=\max_{\hat{s},\hat{a}_d}\|T_d(\hat{s}'|\hat{s},\hat{a}_d)-T(\hat{s}'|\hat{s},\hat{a}_d)\|_{1} =\max_{\hat{s},\hat{a}_d}2TV(T_d(\cdot|\hat{s},\hat{a}_d),T(\cdot|\hat{s},\hat{a}_d))\leq 2\max_{\hat{s}'}|T_d(\hat{s}'|\hat{s},\hat{a}_d)-T(\hat{s}'|\hat{s},\hat{a}_d)| \leq  2K_p \alpha$$
We upper bound the difference in reward using our Lipschitz assumption:
    \begin{align*}
        \epsilon_R=&\max_{\hat{s}, \hat{a}_d} |R_d(\hat{s}, \hat{a}_d) - R(\hat{s}, \hat{a}_d)|\\
        =& \max_{\hat{s}, \hat{a}_d} |R(\hat{s}_d, \hat{a}_d) - R(\hat{s}, \hat{a}_d)|\\
        \leq & \mathcal{L_R}\|\hat{s}_d-\hat{s}\|_{\infty} \\
        =& \mathcal{L_R} \alpha
    \end{align*}

The bound holds by applying simulation lemma to $M_{\hat{A}_d}$ and $M_d$ with $\epsilon_P$ and $\epsilon_R$ above:
\begin{align*}
    \|V_{M_{\hat{A}_d}}^\pi - V_{M_d}^\pi \|_{\infty}=\|V_{M}^\pi - V_{M_d}^\pi \|_{\infty}\leq \frac{\mathcal{L_R}\alpha}{1-\gamma}+\frac{\gamma K_p \alpha}{(1-\gamma)^2}
\end{align*}
\end{proof}


\subsection{Case I}
\begin{theorem} 
Suppose that $0 \leq \varepsilon<\frac{1}{1-\gamma}$ and $0 \leq \delta<1$ are two real numbers. Let M be any continuous meta-MDP with inner stochastic game $G = \langle S, A, T_\text{inner}, R_\text{inner} \rangle$.
Let us denote $M_d=\langle \hat{S}_d, \hat{A}_d, T_d, R_d, \gamma\rangle$ as discretised version of $M$ (as described in Case I) using grid size of $\lambda$. There exists inputs $m=m\left(\frac{1}{\varepsilon}, \frac{1}{\delta}\right)$ and $\varepsilon_1$, satisfying 

$$m\left(\frac{1}{\varepsilon}, \frac{1}{\delta}\right)=\tilde{O}\left(\frac{\left(\frac{2 \sqrt{n|S||A|}} {\lambda} + 1 \right)^{n|S||A|}}{\varepsilon^2(1-\gamma)^4}\right)$$

and $\frac{1}{\varepsilon_1}=O\left(\frac{1}{\varepsilon}\right)$, such that if $R\text{-}MAX$ is executed on $M$ with inputs $m$ and $\varepsilon_1$, then the following holds. Let $\mathcal{A}_t$ denote $R\text{-}MAX$'s policy at time $t$ and $\hat{s}_t$ denote the state at time $t$. With probability at least $1-\delta$, 

$$ V_{M}^*\left(\hat{s}_t\right) - V_{M}^{\mathcal{A}_t}\left(\hat{s}_t\right) \leq \varepsilon + \frac{\mathcal{K} \lambda}{(1 - \hat{\gamma})^2} + \frac{\mathcal{L_R}\alpha}{1-\gamma}+\frac{\gamma K_p \alpha }{(1-\gamma)^2}$$

is true for all but
$$
\tilde{O}\left(\frac{\left(\frac{2 \sqrt{n|S||A|}} {\lambda} + 1 \right)^{2n|S||A|} \left(\frac{2 \sqrt{|S||A|}} {\lambda} + 1 \right)^{|S||A|}} {\varepsilon^3(1-\gamma)^6}\right)
$$
timesteps.

\end{theorem}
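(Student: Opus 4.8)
The plan is to bound the regret $V_M^*(\hat{s}_t) - V_M^{\mathcal{A}_t}(\hat{s}_t)$ in the original continuous meta-MDP by inserting the corresponding optimal and learned-policy values in the discretised meta-MDP $M_d$ and then invoking the three results already established. Concretely, I would split the regret via a telescoping identity into three pieces: the optimal-value gap between $M$ and $M_d$, the learning regret inside $M_d$, and the learned-policy value gap between $M_d$ and $M$:
\begin{align*}
V_M^*(\hat{s}_t) - V_M^{\mathcal{A}_t}(\hat{s}_t)
&= \big[V_M^*(\hat{s}_t) - V_{M_d}^*(\hat{s}_t)\big] \\
&\quad + \big[V_{M_d}^*(\hat{s}_t) - V_{M_d}^{\mathcal{A}_t}(\hat{s}_t)\big] \\
&\quad + \big[V_{M_d}^{\mathcal{A}_t}(\hat{s}_t) - V_M^{\mathcal{A}_t}(\hat{s}_t)\big].
\end{align*}

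First, I would bound the opening bracket using Theorem~\ref{lemma:disc_error}, since $V_M^*(\hat{s}_t) - V_{M_d}^*(\hat{s}_t) \leq \norm{V_M^* - V_{M_d}^*}_\infty \leq \frac{\mathcal{K}\lambda}{(1-\hat{\gamma})^2}$; this is a deterministic term. Next, the middle bracket is precisely the PAC learning guarantee of Theorem~\ref{thm:pac1}: with probability at least $1-\delta$ it is at most $\varepsilon$ for all but the stated number of timesteps, so this is the only bracket that carries both the failure probability and the sample-complexity count. Finally, because the R-FOS policy $\mathcal{A}_t$ produced by $R_\text{MAX}$ only ever selects meta-actions in the discretised action space $\hat{A}_d$, it qualifies as a discretised policy $\pi : \hat{S} \to \hat{A}_d$, so Lemma~\ref{lemma:simulation} applies directly and bounds the final bracket by $\frac{\mathcal{L}_R\alpha}{1-\gamma} + \frac{\gamma K_p \alpha}{(1-\gamma)^2}$, again deterministically.

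Summing the three bounds yields the claimed inequality, with the probability $1-\delta$ and the timestep count inherited verbatim from Theorem~\ref{thm:pac1}, since the other two terms hold unconditionally and merely add as fixed additive constants. The point requiring the most care is verifying that the learned policy $\mathcal{A}_t$ indeed maps meta-states to the discretised action set $\hat{A}_d$, which is exactly what licenses applying Lemma~\ref{lemma:simulation} to the third bracket; this holds by construction of R-FOS, which runs $R_\text{MAX}$ over $\hat{A}_d$. Beyond this, I expect no genuine obstacle: the remaining subtleties are only keeping each one-sided inequality oriented correctly (each bracket being upper-bounded by the infinity-norm of the corresponding value difference) and confirming that the deterministic discretisation and simulation errors do not perturb the high-probability sample-complexity statement, which follows because they are independent of the random samples.
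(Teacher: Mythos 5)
Your proposal is correct and follows essentially the same route as the paper: the paper's proof simply ``adds'' Theorem~\ref{thm:pac1}, Theorem~\ref{lemma:disc_error}, and Lemma~\ref{lemma:simulation}, and your three-term telescoping decomposition of $V_M^*(\hat{s}_t)-V_M^{\mathcal{A}_t}(\hat{s}_t)$ is precisely the implicit structure of that summation. Your explicit check that $\mathcal{A}_t$ only selects actions in $\hat{A}_d$ (so Lemma~\ref{lemma:simulation} applies) and that the two discretisation terms are deterministic, leaving the probability $1-\delta$ and timestep count to come solely from Theorem~\ref{thm:pac1}, merely spells out details the paper leaves unstated.
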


\begin{proof}
The result follows from adding Theorem \ref{thm:pac1}, \ref{lm:discretised_transition}, and Lemma \ref{lemma:simulation}.
\end{proof}

\subsection{Case II}

\begin{theorem} 
Suppose that $0 \leq \varepsilon<\frac{1}{1-\gamma}$ and $0 \leq \delta<1$ are two real numbers. Let M be any continuous meta-MDP with inner stochastic game $G = \langle S, A, T_\text{inner}, R_\text{inner} \rangle$.
Let us denote $M_d=\langle \hat{S}_d, \hat{A}_d, T_d, R_d, \gamma\rangle$ as discretised version of $M$ (as described in Case I) using grid size of $\lambda$. There exists inputs $m=m\left(\frac{1}{\varepsilon}, \frac{1}{\delta}\right)$ and $\varepsilon_1$, satisfying 

$$m\left(\frac{1}{\varepsilon}, \frac{1}{\delta}\right)=\tilde{O}\left(\frac{(|S||A|)^{nh}}{\varepsilon^2(1-\gamma)^4}\right)$$

and $\frac{1}{\varepsilon_1}=O\left(\frac{1}{\varepsilon}\right)$, such that if $R\text{-}MAX$ is executed on $M$ with inputs $m$ and $\varepsilon_1$, then the following holds. Let $\mathcal{A}_t$ denote $R\text{-}MAX$'s policy at time $t$ and $\hat{s}_t$ denote the state at time $t$. With probability at least $1-\delta$, 

$$ V_{M}^*\left(\hat{s}_t\right) - V_{M}^{\mathcal{A}_t}\left(\hat{s}_t\right) \leq \varepsilon + \frac{\mathcal{K} \lambda}{(1 - \hat{\gamma})^2} + \frac{\mathcal{L_R}\alpha}{1-\gamma}+\frac{\gamma K_p \alpha}{2(1-\gamma)^2}$$

is true for all but
$$
\tilde{O}\left(\frac{(|S||A|)^{2nh} \left(\frac{2 \sqrt{|S||A|}} {\lambda} + 1 \right)^{|S||A|}} {\varepsilon^3(1-\gamma)^6}\right)
$$
timesteps.

\end{theorem}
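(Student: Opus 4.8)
The plan is to bound the target quantity $V_M^*(\hat{s}_t) - V_M^{\mathcal{A}_t}(\hat{s}_t)$ by a single telescoping application of the triangle inequality that inserts the two intermediate quantities $V_{M_d}^*(\hat{s}_t)$ and $V_{M_d}^{\mathcal{A}_t}(\hat{s}_t)$, and then to control each of the three resulting differences with a result already established in Steps 4--6. Concretely, I would write
\[
V_M^*(\hat{s}_t) - V_M^{\mathcal{A}_t}(\hat{s}_t) = \big[V_M^*(\hat{s}_t) - V_{M_d}^*(\hat{s}_t)\big] + \big[V_{M_d}^*(\hat{s}_t) - V_{M_d}^{\mathcal{A}_t}(\hat{s}_t)\big] + \big[V_{M_d}^{\mathcal{A}_t}(\hat{s}_t) - V_M^{\mathcal{A}_t}(\hat{s}_t)\big].
\]

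First I would dispatch the two outer brackets using the $\ell_\infty$ bounds, which hold at every state and hence at $\hat{s}_t$ in particular. The first bracket is at most $\|V_M^* - V_{M_d}^*\|_\infty \le \mathcal{K}\lambda/(1-\hat{\gamma})^2$ by the discretisation bound (Theorem \ref{lemma:disc_error}). For the third bracket I would first observe that R-FOS only ever emits actions lying in the $\varepsilon$-net $\hat{A}_d$, so $\mathcal{A}_t$ is a legitimate discretised policy $\hat{S}\to\hat{A}_d$ and therefore $V_M^{\mathcal{A}_t}=V_{M_{\hat{A}_d}}^{\mathcal{A}_t}$; Lemma \ref{lemma:simulation} then yields $\|V_{M_d}^{\mathcal{A}_t}-V_M^{\mathcal{A}_t}\|_\infty \le \mathcal{L_R}\alpha/(1-\gamma)+\gamma K_p\alpha/(1-\gamma)^2$. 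Both of these are deterministic, $t$-independent constants.

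The middle bracket is the only stochastic, sample-dependent piece, and it is exactly the quantity controlled by the Case II PAC bound (Theorem \ref{thm:pac2}): with probability at least $1-\delta$, the event $V_{M_d}^*(\hat{s}_t)-V_{M_d}^{\mathcal{A}_t}(\hat{s}_t)\le\varepsilon$ fails on at most $\tilde{O}\!\big((|S||A|)^{2nh}(2\sqrt{|S||A|}/\lambda+1)^{|S||A|}/(\varepsilon^3(1-\gamma)^6)\big)$ timesteps. Since the other two brackets are bounded uniformly over all timesteps, the sample-complexity count is inherited verbatim from this step, and adding the three bounds delivers the stated inequality on all but that many timesteps with the same probability $1-\delta$.

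I expect the only genuine obstacle to be bookkeeping rather than mathematics: ensuring each value difference is measured in the correct MDP against the correct policy, and that the PAC guarantee (a pointwise, high-probability, all-but-finitely-many-timesteps statement) composes cleanly with the two uniform $\ell_\infty$ guarantees. In particular I would be careful that $V_M^*$ optimises over all continuous policies whereas $\mathcal{A}_t$ and the $M_d$-optimum range only over discretised policies; the gap between these two optima is precisely what Theorem \ref{lemma:disc_error} absorbs into the $\mathcal{K}\lambda/(1-\hat{\gamma})^2$ term, so no separate argument is required. I also note that Lemma \ref{lemma:simulation} carries no factor of $2$ in its transition term, so I would propagate $\gamma K_p\alpha/(1-\gamma)^2$ exactly as that lemma gives it.
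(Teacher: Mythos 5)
Your proposal is correct and takes exactly the paper's route: the paper's entire proof for this theorem is to sum the Case~II PAC bound (Theorem~\ref{thm:pac2}), the discretisation bound (Theorem~\ref{lemma:disc_error}), and Lemma~\ref{lemma:simulation}, which is precisely the telescoping decomposition through $V_{M_d}^*$ and $V_{M_d}^{\mathcal{A}_t}$ that you spell out, with the sample-complexity count inherited from the PAC step since the other two brackets are uniform over timesteps. Your closing remark is also well taken: Lemma~\ref{lemma:simulation} yields $\gamma K_p\alpha/(1-\gamma)^2$ with no halving, so the factor $2$ in the denominator of the transition term in this theorem's statement is an internal inconsistency of the paper (the Case~I final bound carries no such factor), and the bound your argument delivers is the one the paper's own proof actually supports.
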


\begin{proof}
The steps are similar to Case I.
\end{proof}

\newpage
\section{Experiment Details}

\subsection{Matching Pennies Table Summary}
\begin{table}[!htb]
\centering
\begin{tabular}{c|cc}
Player 1\textbackslash Player 2 & Head     & Tail     \\ \hline
Head                            & (+1, -1) & (-1, +1) \\ 
Tail                            & (-1, +1) & (+1, -1) \\ 
\end{tabular}
\label{tb:mp_appendix}
\caption{Payoff Matrix for MP}
\end{table}

\subsection{Implementation Details}
The opponent is a standard Q-learning agent who updates the Q-values at every meta-step and selects an action that corresponds to the maximum Q-value at a given state. To enable better empirical performance, the meta-agent uses Boltzmann sampling instead of greedy sampling to sample the next action from the Q-value table.
We use a discount factor of $0.8$.
For each run, we run a total of  $10 \times m \times |\hat{\mathcal{S}}|$ R-FOS iterations, where 10 is our chosen hyper-parameter.

\end{document}